\documentclass[11pt, a4paper]{lumia}

\usepackage[sort&compress]{natbib}
\usepackage{xspace}

\theoremstyle{plain}
\newtheorem{theorem}{Theorem}
\newtheorem{proposition}[theorem]{Proposition}
\newtheorem*{proposition*}{Proposition}

\theoremstyle{definition}

\theoremstyle{definition}

\def\eqref#1{equation~\ref{#1}}

\usepackage{graphicx}           
\usepackage{tikz}               
\usepackage[edges]{forest}      

\usepackage{url}                
\usepackage{xurl}               

\usepackage{array}              
\usepackage{longtable}          
\usepackage{multirow}           
\usepackage{makecell}           
\usepackage{ragged2e}           

\usepackage{mathtools}          
\usepackage{nicefrac}           

\usepackage{algorithm}          
\usepackage{algorithmicx}       
\usepackage{algpseudocode}      
\usepackage{listings}           

\usepackage{subcaption}         
\usepackage{wrapfig}            
\usepackage[export]{adjustbox}  

\usepackage{changes}            
\usepackage{xspace}             
\usepackage[normalem]{ulem}     
\usepackage{CJKutf8}            

\usepackage[tikz]{bclogo}       
\usepackage[framemethod=tikz]{mdframed} 

\usepackage{lipsum}             
\usepackage{tocloft}            
\usepackage{afterpage}          
\usepackage{bbding}             
\usepackage{epigraph}           
\usepackage{minitoc}            
\usepackage{multicol}           
\usepackage{textgreek}          

\newcolumntype{P}[1]{>{\RaggedRight\arraybackslash}p{#1}}

\definecolor{uclablue}{RGB}{39, 116, 174}
\definecolor{bigaired}{RGB}{156, 0, 0}
\definecolor{myblue}{HTML}{598BE7}
\definecolor{mildblue}{RGB}{31,119,180}
\definecolor{sectionblue}{RGB}{70, 130, 180}
\definecolor{methodblue}{RGB}{0, 150, 136}
\definecolor{bgblue}{RGB}{245,243,253}
\definecolor{ttblue}{RGB}{91,194,224}
\definecolor{mygreen}{rgb}{0.64, 0.56, 0.88}
\definecolor{myyellow}{rgb}{0.68, 0.6, 0.1}
\definecolor{fancygreen}{rgb}{0.33, 0.68, 0.20}
\definecolor{salmon}{rgb}{0.94, 0.52, 0.49}
\definecolor{tablegreen}{rgb}{0.82, 0.94, 0.75}
\definecolor{tableblue}{rgb}{0.81, 0.90, 0.94}
\definecolor{tablered}{rgb}{0.97, 0.85, 0.85}
\definecolor{tableorange}{rgb}{0.96, 0.85, 0.81}
\definecolor{myorange}{rgb}{1.0, 0.49, 0.0}
\definecolor{tlgreen}{rgb}{0.33, 0.68, 0.20}
\definecolor{darkgreen}{RGB}{0,100,0}
\definecolor{darkred}{RGB}{200, 0, 0}
\definecolor{customyellow}{HTML}{FFFACD}
\definecolor{refinegreen}{RGB}{0, 128, 75}
\definecolor{scoregreen}{RGB}{34, 139, 34}
\definecolor{hidden-blue}{RGB}{194,232,247}
\definecolor{hidden-black}{RGB}{20,68,106}
\definecolor{yes}{HTML}{C6EFCE}
\definecolor{no}{HTML}{FFC7CE}
\definecolor{partial}{HTML}{FFEB9C}
\definecolor{external}{HTML}{D9E1F2}
\definecolor{hdr}{HTML}{F2F2F2}
\definecolor{GRPOrow}{gray}{0.96}
\definecolor{FlowRLrow}{RGB}{225,236,255}
\definecolor{FlowBlue}{RGB}{80,120,210}
\definecolor{GRPOGray}{gray}{0.35}
\definecolor{mygrey}{RGB}{200,200,200}

\setlist[itemize]{leftmargin=20pt, noitemsep, topsep=0pt}

\NewDocumentCommand{\kaiyan}{mO{}}{\textcolor{purple}{\textsuperscript{\textit{kaiyan}}\textsf{\textbf{\small[#1]}}}}
\NewDocumentCommand{\yuxin}{mO{}}{\textcolor{cyan}{\textsuperscript{\textit{yuxin}}\textsf{\textbf{\small[#1]}}}}
\NewDocumentCommand{\bx}{mO{}}{\textcolor{green}{\textsuperscript{\textit{bx}}\textsf{\textbf{\small[#1]}}}}
\NewDocumentCommand{\at}{mO{}}{\textcolor{red}{\textsuperscript{\textit{AT}}\textsf{\textbf{\small[#1]}}}}
\NewDocumentCommand{\re}{mO{}}{\textcolor{blue}{\textsuperscript{\textit{RE}}\textsf{\textbf{\small[#1]}}}}
\NewDocumentCommand{\ybsun}{mO{}}{\textcolor{magenta}{\textsuperscript{\textit{youbang}}\textsf{\textbf{\small[#1]}}}}
\NewDocumentCommand{\runze}{mO{}}{\textcolor{orange}{\textsuperscript{\textit{runze}}\textsf{\textbf{\small[#1]}}}}
\NewDocumentCommand{\add}{mO{}}{\textcolor{darkgreen}{\textsuperscript{\textit{Maybe Consider Discuss}}\textsf{\textbf{[#1]}}}}

\newcommand{\cmark}{\textcolor{darkgreen}{\boldmath$\checkmark$}}
\newcommand{\xmark}{\textcolor{darkred}{\boldmath$\times$}}

\newcommand{\modelicon}[2]{\raisebox{-0.2\height}{\includegraphics[height=#1]{img/#2}}}

\DeclareMathOperator*{\argmax}{arg\,max}

\newenvironment{itemize*}%
 {\leftmargini=10pt\begin{itemize}%
  \setlength{\itemsep}{0pt}%
  \setlength{\parskip}{0pt}%
  }%
 {\end{itemize}}

\newenvironment{enumerate*}%
 {\begin{enumerate}%
  \setlength{\itemsep}{0pt}%
  \setlength{\parskip}{0pt}}%
 {\end{enumerate}}

\newcommand{\cellstatus}[1]{%
  \begingroup
  \StrTrim{#1}[\statusval]%
  \IfStrEq{\statusval}{Yes}{\cellcolor{yes}\cmark}{}%
  \IfStrEq{\statusval}{No}{\cellcolor{no}\xmark}{}%
  \IfBeginWith{\statusval}{Yes (}{\cellcolor{yes}\cmark~\textit{\statusval\unskip}}{}%
  \IfStrEq{\statusval}{Partial}{\cellcolor{partial}\textbf{Partial}}{}%
  \IfStrEq{\statusval}{External}{\cellcolor{external}\textbf{External}}{}%
  \endgroup
}

\newtcolorbox{myboxi}[1][]{
  breakable,
  title=#1,
  colback=red!5,
  colbacktitle=red!5,
  coltitle=black,
  fonttitle=\bfseries,
  bottomrule=0pt,
  toprule=0pt,
  leftrule=2pt,
  rightrule=2pt,
  titlerule=0pt,
  arc=0pt,
  outer arc=0pt,
  colframe=red,
}

\newtcolorbox{myboxnote}[1][]{
  breakable,
  title=#1,
  colback=orange!0,
  colbacktitle=orange!0,
  coltitle=black,
  fonttitle=\bfseries,
  bottomrule=0pt,
  toprule=0pt,
  leftrule=2pt,
  rightrule=2pt,
  titlerule=0pt,
  arc=0pt,
  outer arc=0pt,
  colframe=orange,
}

\newtcolorbox{myboxii}[1][]{
  breakable,
  freelance,
  title=#1,
  colback=white,
  colbacktitle=white,
  coltitle=black,
  fonttitle=\bfseries,
  bottomrule=0pt,
  boxrule=0pt,
  colframe=white,
  overlay unbroken and first={
  \draw[red!75!black,line width=3pt]
    ([xshift=5pt]frame.north west) -- 
    (frame.north west) -- 
    (frame.south west);
  \draw[red!75!black,line width=3pt]
    ([xshift=-5pt]frame.north east) -- 
    (frame.north east) -- 
    (frame.south east);
  },
  overlay unbroken app={
  \draw[red!75!black,line width=3pt,line cap=rect]
    (frame.south west) -- 
    ([xshift=5pt]frame.south west);
  \draw[red!75!black,line width=3pt,line cap=rect]
    (frame.south east) -- 
    ([xshift=-5pt]frame.south east);
  },
  overlay middle and last={
  \draw[red!75!black,line width=3pt]
    (frame.north west) -- 
    (frame.south west);
  \draw[red!75!black,line width=3pt]
    (frame.north east) -- 
    (frame.south east);
  },
  overlay last app={
  \draw[red!75!black,line width=3pt,line cap=rect]
    (frame.south west) --
    ([xshift=5pt]frame.south west);
  \draw[red!75!black,line width=3pt,line cap=rect]
    (frame.south east) --
    ([xshift=-5pt]frame.south east);
  },
}

\tcbset{
  takeawaysbox/.style={
    title=Takeaways,
    colback=lightblue!80,
    colframe=black,
    fonttitle=\bfseries\small,
    coltitle=white,
    colbacktitle=black,
    enhanced,
    attach boxed title to top left={xshift=2.5mm,yshift=-2.5mm},
    boxed title style={rounded corners, size=small, colframe=black, colback=black},
    width=\linewidth,
    arc=3.5mm
  }
}

\mdfdefinestyle{mystyle}{%
  rightline=true,
  innerleftmargin=10,
  innerrightmargin=10,
  outerlinewidth=3pt,
  topline=false,
  rightline=true,
  bottomline=false,
  skipabove=\topsep,
  skipbelow=\topsep
}

\tikzset{%
    every node/.style={font=\tiny},
    parent/.style =          {align=center,text width=2cm,rounded corners=3pt, line width=0.3mm, fill=gray!10,draw=gray!80},
    child/.style =           {align=center,text width=2.0cm,rounded corners=3pt, fill=blue!10,draw=blue!80,line width=0.3mm},
    grandchild/.style =      {align=center,text width=2cm,rounded corners=3pt},
    greatgrandchild/.style = {align=center,text width=1.5cm,rounded corners=3pt},
    greatgrandchild2/.style = {align=center,text width=1.5cm,rounded corners=3pt},    
    referenceblock/.style =  {align=center,text width=1.5cm,rounded corners=2pt},
    pretrain/.style =           {align=center,text width=2.0cm,rounded corners=3pt, fill=blue!10,draw=blue!80,line width=0.3mm},   
    pretrain_work/.style =           {align=center, text width=8.5cm,rounded corners=3pt, fill=blue!10,draw=blue!0,line width=0.3mm},  
    template/.style =           {align=center,text width=2.0cm,rounded corners=3pt, fill=red!10,draw=red!80,line width=0.3mm},   
    template_work/.style =           {align=center,text width=8.5cm,rounded corners=3pt, fill=red!10,draw=red!0,line width=0.3mm},    
    answer/.style =           {align=center,text width=2.0cm,rounded corners=3pt, fill= cyan!10,draw= cyan!80,line width=0.3mm},   
    answer_work/.style =           {align=center,text width=8.5cm,rounded corners=3pt, fill= cyan!10,draw= cyan!0,line width=0.3mm},      
    multiple/.style =           {align=center,text width=2.0cm,rounded corners=3pt, fill= orange!10,draw= orange!80,line width=0.3mm},   
    multiple_work/.style =           {align=center,text width=8.5cm,rounded corners=3pt, fill= orange!10,draw= orange!0,line width=0.3mm},        
    tuning/.style =           {align=center,text width=2.0cm,rounded corners=3pt, fill= magenta!10,draw= magenta!80,line width=0.3mm},   
    tuning_work/.style =           {align=center,text width=8.5cm,rounded corners=3pt, fill= magenta!10,draw= magenta!0,line width=0.3mm},          
}

\newcommand{\lstbg}[3][0pt]{{\fboxsep#1\colorbox{#2}{\strut #3}}}

\lstdefinelanguage{diff}{
  basicstyle=\ttfamily\small,
  morecomment=[f][\lstbg{red!20}]-,
  morecomment=[f][\lstbg{green!20}]+,
}

\lstdefinelanguage{diffpython}{
  language=diff,
  morekeywords={def, if, else, for, while, return, import, from, as, class, with, try, except, finally, raise, lambda, and, or, not, in, is, None, True, False},
  morecomment=[l]{\#},
  morestring=[b]",
  morestring=[b]',
}

\usepackage{xspace}

\usepackage{xcolor}

\definecolor{ForestGreen}{RGB}{34,139,34}
\definecolor{myyellow}{RGB}{181, 181, 27}

\usepackage[table,xcdraw,usenames,dvipsnames]{xcolor}

\usepackage{amsmath}
\usepackage{amssymb}
\usepackage{mathtools}
\usepackage{amsthm}
\usepackage{fontawesome5} 

\usepackage[capitalize,noabbrev]{cleveref}

\usepackage{multirow}
\usepackage{makecell}
\usepackage{enumerate}
\usepackage{enumitem}
\usepackage{pifont}

\usepackage[ruled,algo2e,vlined]{algorithm2e}
\SetKwInOut{Input}{Input}\SetKwInOut{Output}{Output}
\SetKwComment{Comment}{$\triangleright$\ }{}

\definecolor{darkgreen}{RGB}{30, 130, 30}
\definecolor{cream}{RGB}{253, 250, 242}
\renewcommand{\cmark}{\textcolor{darkgreen}{\ding{51}}} 
\renewcommand{\xmark}{\textcolor{red}{\ding{55}}}       

\usepackage{listings}

\usepackage{ulem}
\usepackage{pifont}
\def\method{Orchestra-o1}
\def\model{Orchestra-o1-8B}

\setheadertext{LUMIA Lab}

\correspondingemail{\emailicon\ fzhang25@cse.cuhk.edu.hk \quad $^*$ Equal Contribution \quad $^\dagger$ Corresponding Author}

\githublink{https://github.com/zfkarl/Orchestra-o1} \huggingfacelink{https://huggingface.co/Karl28/Orchestra-o1-8B}

\title{Orchestra-o1: Omnimodal Agent Orchestration}
\setheadertitle{Orchestra-o1: Omnimodal Agent Orchestration}

\author{%
  Fan Zhang$^{1,*}$, Vireo Zhang$^{*}$, Shengju Qian$^{2,\dagger}$, Haoxuan Li$^{3}$, Hao Wu$^{4}$, Jinyang Wu$^{4}$,\linebreak Donghao Zhou$^{1}$, Zhihong Zhu$^{3}$, Zheng Lian$^{5}$, Xin Wang$^{2}$, Pheng-Ann Heng$^{1,\dagger}$\\
  $^1$CUHK \hspace{1mm}
  $^2$LIGHTSPEED \hspace{1mm}
  $^3$PKU \hspace{1mm}
  $^4$THU \hspace{1mm}
  $^5$Tongji University
  \\
}

\begin{document}

\begin{abstract}
The recent success of agent swarms has shifted the paradigm of large language model (LLM)-based agents from single-agent workflows to multi-agent systems, highlighting the importance of agent orchestration for task decomposition and collaboration.
However, existing orchestration frameworks are limited to a narrow set of modalities and struggle to generalize to more complex settings where heterogeneous modalities coexist and interact.
This limitation becomes particularly pronounced in omnimodal scenarios, where tasks require the unified understanding and coordination of diverse inputs such as text, image, audio, and video.
In this work, we propose \method{}, an omnimodal agent orchestration framework designed to support efficient agent collaboration across multiple modalities.
\method{} introduces a unified orchestration mechanism that enables modality-aware task decomposition, online sub-agent specialization, and parallel sub-task execution.
This scalable design allows agent systems to effectively tackle complex real-world tasks involving heterogeneous information sources, surpassing the second-best approach by $10.3\%$ accuracy on the OmniGAIA benchmark.
Furthermore, we introduce decision-aligned group relative policy optimization (DA-GRPO), an efficient agentic reinforcement learning approach for training \model{}, which also achieves state-of-the-art performance against all existing open-source omnimodal agents.
The source code is publicly available at the above links.

\end{abstract}

\maketitle


\begin{figure*}[h]
    \centering
    \includegraphics[width=\linewidth]{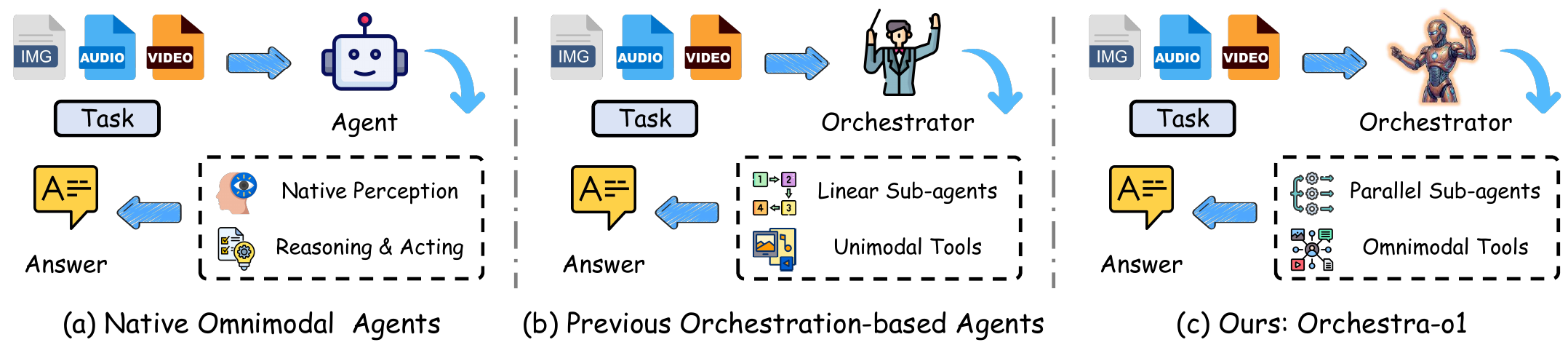}
    \caption{Comparison among three types of omnimodal agents.}
    \label{fig:cmp}
\end{figure*}

\section{Introduction}

Large language model (LLM)-based agents \citep{luo2025large,wang2024survey} have recently emerged as a powerful paradigm for building intelligent systems that can reason, plan, use tools, and interact with external environments. By augmenting LLMs with harness mechanisms \citep{pan2026natural,meng2026agent}, agent systems have substantially expanded the boundary of what language models can accomplish. Representative applications such as code generation and execution \citep{zhang2024codeagent,huang2023agentcoder}, autonomous web research \citep{team2025tongyi,qiao2025webresearcher}, interactive problem solving \citep{yu2026webanchor,tao2025webshaper}, and open-ended computer-use tasks \citep{agashe2025agent,wangcomputer} have demonstrated the potential of LLM agents to reshape human productivity and information access. More recently, the success of agent swarms \citep{team2026kimi} has further shifted the research focus from single-agent workflows to multi-agent systems, where a main agent coordinates multiple specialized agents to decompose complex tasks, execute sub-tasks, and aggregate intermediate results. This paradigm highlights the importance of agent orchestration, which determines how agents are created, specialized, scheduled, and coordinated during task solving.

Despite this progress, most existing LLM-based agent systems are still designed for a limited range of modalities, typically focusing on either pure-text tasks \citep{zhang2024cut} or vision-language tasks \citep{geng2025webwatcher}. This creates a clear gap between current agent research and real-world scenarios, where information is inherently omnimodal and often involves the coexistence and interaction of text, image, audio, and video. In everyday situations, humans naturally process heterogeneous sensory signals in a unified manner. For example, during face-to-face communication, people simultaneously interpret spoken language, facial expressions, gestures, and environmental cues, and then decide how to respond accordingly. Such omnimodal understanding and decision-making are natural for humans but remain highly challenging for existing agents. To solve omnimodal tasks, an agent must not only perceive information from diverse modalities, but also reason over their interactions, decide which specialized capabilities are needed, and coordinate actions across multiple tools or sub-agents. This requires a unified framework that supports both omnimodal perception and high-level agentic decision-making.

As shown in Figure \ref{fig:cmp}, current omnimodal agents can be broadly categorized into two types. The first category is \textit{native omnimodal agents} \citep{team2026qwen3}, which directly employ an omnimodal large language model (OLLM) as the agentic backend and equip it with various action tools. In this design, the same model is expected to perform perception, reasoning, planning, and tool-use simultaneously. However, existing OLLMs still exhibit limited capability in jointly handling perception and action, especially when tasks require long-horizon reasoning, external information seeking, code execution, or fine-grained cross-modal understanding. As a result, even strong proprietary omnimodal models such as Gemini-3-Pro \citep{gemini3pro} achieve only $62.5\%$ accuracy on the challenging benchmark OmniGAIA \citep{li2026omnigaia}. The second category is \textit{orchestration-based agents} \citep{ruan2026aorchestra}, which decouple perception and action from high-level reasoning. In such systems, a text-based language model usually serves as the main agent or orchestrator, while perception and action are delegated to specialized sub-agents equipped with corresponding tools. This design separates high-level decision-making from low-level modality processing, making the system more modular, extensible, and potentially more scalable for complex omnimodal tasks.

In this paper, we focus on orchestration-based omnimodal agents. Designing an effective omnimodal agent swarm, however, is non-trivial for the following reasons. First, many powerful closed-source agent swarm frameworks, such as Kimi \citep{team2026kimi} and Claude \citep{claudeopus46}, are hidden behind proprietary APIs, making it difficult to extend them for omnimodal research. Second, existing open-source agent orchestration frameworks \citep{ruan2026aorchestra,su2025toolorchestra} are often limited by incomplete perception and action toolsets, as well as relatively rigid and linear sub-agent workflows. These limitations restrict both the scalability and efficiency of agent systems when handling complex tasks involving heterogeneous modalities. Towards this end, we propose \method{}, an omnimodal agent orchestration framework designed to support efficient agent collaboration across multiple modalities. At the model level, \method{} supports flexible agentic backends, allowing both the main agent and sub-agents to be instantiated with different models, including open-source models and proprietary models. At the tool level, we provide a unified tool ecosystem consisting of perception tools and action tools, enabling the system to understand and coordinate diverse inputs such as text, image, audio, and video, while also supporting external information seeking and code execution. At the scaffold level, \method{} introduces a collaborative orchestration mechanism based on agent skills and context memory, enabling modality-aware task decomposition, online sub-agent specialization, and parallel sub-task execution. Together, these designs make \method{} both effective and efficient for solving complex omnimodal agent tasks. When using GPT-5 \citep{openaigpt5} as the main agent, \method{} establishes a new state-of-the-art (SOTA) on the OmniGAIA benchmark and substantially outperforms competing baselines, achieving a $32.8\%$ improvement over AOrchestra \citep{ruan2026aorchestra} and a $10.3\%$ improvement over Gemini-3-Pro \citep{gemini3pro}.

In addition to the orchestration framework, we further explore how to train an open-source model to serve as the main agent in \method{}. To this end, we propose \textit{decision-aligned group relative policy optimization} (DA-GRPO), an efficient offline agentic reinforcement learning algorithm for enhancing orchestration decision-making. DA-GRPO extends GRPO \citep{guo2025deepseek} with a design specifically tailored for agent orchestration. Unlike the original GRPO, which focuses solely on final-answer correctness, DA-GRPO explicitly aligns the main agent's step-level decisions with high-quality reference trajectories, covering key decisions such as task delegation, sub-agent selection, tool usage, and answer generation. Leveraging high-quality synthetic trajectories and a multi-dimensional rubric-based reward design, we train \model{} based on Qwen3-8B \citep{yang2025qwen3} to serve as an open-source main agent within the \method{} framework. Experimental results demonstrate that \model{} significantly improves the performance of open-source omnimodal agents on OmniGAIA, increasing the previous best accuracy from $20.8\%$ to $30.0\%$.

In summary, the main contributions of this paper are as follows:
\begin{itemize}[leftmargin=*]
    \item \textit{\textbf{Omnimodal Agent Orchestration Framework.}}
    We propose \method{}, an omnimodal agent orchestration framework for complex real-world agent tasks. Through modality-aware task decomposition, online sub-agent specialization, and parallel sub-task execution, \method{} decouples high-level orchestration from specialized perception and action execution, serving as a scalable open-source framework for building omnimodal agent swarms.
    \item \textit{\textbf{Efficient Agent Orchestration Training Recipe.}} We develop DA-GRPO, an efficient agentic reinforcement learning algorithm for orchestration training. DA-GRPO aligns the main agent's step-level orchestration decisions with high-quality reference trajectories based on multi-dimensional rubric reward design, enabling open-source models to acquire stronger delegation, planning, and decision-making capabilities in omnimodal agent systems.
    \item \textit{\textbf{Multifaceted Experimental Validation.}} Extensive experiments demonstrate that \method{} significantly outperforms existing omnimodal agents. With a strong proprietary main agent, it achieves a new state-of-the-art on OmniGAIA, surpassing the second-best approach by $10.\%$ accuracy. Compared to AOrchestra, \method{} further achieves faster inference and better cost-effectiveness, benefiting from its parallelizable orchestration design. Moreover, when trained with DA-GRPO, \model{} consistently outperforms existing open-source omnimodal agents by a large margin.
\end{itemize}

\section{Related Work}

\subsection{LLM-based Agent Orchestration}

Recent advances in LLM-based agents have shifted from single-agent reasoning systems to multi-agent orchestration frameworks. Early efforts primarily focus on enhancing tool use and planning capabilities within a single agent \citep{yao2022react,schick2023toolformer}, where the model iteratively interacts with external tools to solve complex tasks. 
More recently, multi-agent systems have emerged as a promising direction, where a central orchestrator coordinates multiple specialized agents to improve scalability and task decomposition. Representative works such as AutoGen-style systems \citep{wu2024autogen} and agent swarms \citep{team2026kimi} demonstrate that dividing responsibilities across agents can significantly improve performance on complex reasoning and interactive tasks.
However, existing orchestration frameworks are mostly designed for text-based or limited vision-language settings \citep{ruan2026aorchestra,zhang2026flowsteer}, and often rely on linear or heuristic-driven workflows. In contrast, real-world tasks require more flexible coordination strategies that can dynamically adapt agent roles, parallelize execution, and integrate heterogeneous tools. Our work differs from prior studies by focusing on a unified orchestration framework that supports modality-aware decomposition and scalable multi-agent collaboration in omnimodal environments.

\subsection{Omnimodal Agent Intelligence}

Omnimodal intelligence extends traditional vision-language or audio-language systems to handle heterogeneous inputs such as text, image, audio, and video within a unified framework. Early multimodal models mainly focus on bimodal settings, such as vision-language understanding \citep{li2023blip,liu2023visual}, which demonstrate strong capabilities in aligning visual and textual representations.
With the development of large-scale multimodal models, recent works have begun exploring omnimodal agents \citep{gemini3pro,team2026qwen3,ai2025ming,team2025longcat}. These models aim to unify perception and reasoning across multiple modalities, enabling more general interaction capabilities. However, their performance remains limited in complex agentic scenarios that require long-horizon reasoning, tool use, and multi-step decision-making.
To address these limitations, recent approaches introduce external tool augmentation or modular decomposition to improve omnimodal reasoning \citep{li2026omnigaia}. Nevertheless, these methods often lack systematic orchestration mechanisms for coordinating multiple specialized components. In contrast, our work focuses on an explicit omnimodal agent orchestration paradigm, where perception, reasoning, and action are decoupled and coordinated through a structured multi-agent system, enabling more scalable and efficient omnimodal intelligence.

\section{Methodology}

In this section, we first review the background of agent orchestration and introduce the necessary preliminaries (Section \ref{sec:pre}). We then present our proposed omnimodal agent orchestration framework, \method{} (Section \ref{sec:framework}), followed by the training recipe for deriving an open-source main agent within the framework (Section \ref{sec:training}).

\begin{figure*}[t]
    \centering
    \includegraphics[width=\linewidth]{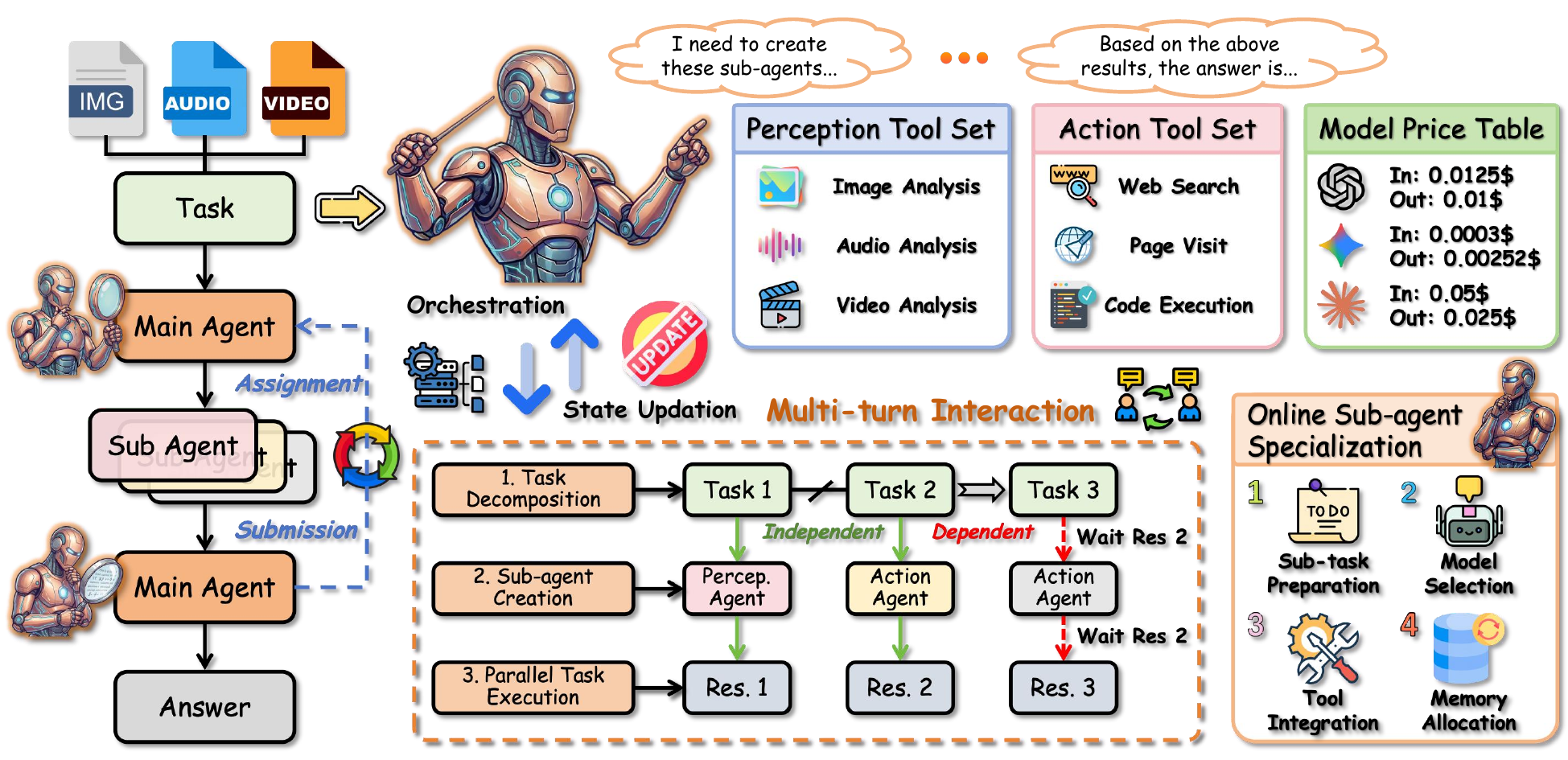}
    \caption{An overview of the \method{} framework.}
    \label{fig:framework}
\end{figure*}

\subsection{Preliminary}\label{sec:pre}

\paragraph{Problem Definition.}
We formulate omnimodal agent orchestration as a multi-round decision-making problem over heterogeneous inputs. Given a task instance $x=(q, \mathcal{M})$, where $q$ denotes the natural-language question and $\mathcal{M}=\{m_i\}_{i=1}^{N}$ denotes a set of auxiliary modality inputs such as images, audios, and videos. The goal is to produce a concise final answer $\hat{a}$ that maximizes the task reward $R(\hat{a}, a^*)$ with respect to the ground-truth answer $a^*$.

\paragraph{System Formulation.}
An orchestration-based agent system consists of a main agent, a set of sub-agent backends, and a tool ecosystem. The main agent $\pi_{\theta}$ acts as an orchestrator rather than directly operating on every modality. At orchestration round $t$, it observes a state:
\begin{equation}
    s_t = \big(q, \mathcal{M}, c_t, H_t, \mathcal{B}, \mathcal{T}\big),
\end{equation}
where $c_t$ is the accumulated context, $H_t$ is the structured sub-task history, $\mathcal{B}$ is the set of available sub-agent models, and $\mathcal{T}$ is the set of tools available to sub-agents. The main agent outputs a structured decision $y_t$ from two action types: $ y_t \in \{\mathtt{delegate}, \mathtt{complete}\}$.
If $y_t=\mathtt{complete}$, the main agent terminates the trajectory and returns $\hat{a}$. If $y_t=\mathtt{delegate}$, it generates a batch of $K_t$ sub-tasks:
\begin{equation}
    \mathcal{U}_t = \{u_{t,j}\}_{j=1}^{K_t}, \quad
    u_{t,j}=(I_{t,j}, C_{t,j}, b_{t,j}, \mathcal{T}_{t,j}),
\end{equation}
where $I_{t,j}$ is a sub-task instruction, $C_{t,j}$ is the context passed from previous rounds, $b_{t,j}\in\mathcal{B}$ is the selected sub-agent backend, and $\mathcal{T}_{t,j}\subseteq\mathcal{T}$ is the assigned tool subset. Each sub-task is executed by an independent sub-agent, producing a result tuple $z_{t,j}$ that contains its status, answer-like result, summary, and execution trace. The results are summarized and appended to $H_{t+1}$, after which the main agent either launches another delegation round or produces the final answer.

This formulation highlights two key requirements for omnimodal orchestration. First, the main agent must make \textit{modality-aware decisions}: it needs to identify which inputs and tools are relevant before dispatching sub-tasks. Second, it must make \textit{dependency-aware scheduling decisions}: independent sub-tasks should be executed in parallel, while dependent sub-tasks should be delayed until prerequisite results become available.

\subsection{The \method{} Framework}\label{sec:framework}

Figure \ref{fig:framework} presents the overall architecture of \method{}. The framework is designed as a hierarchical policy that factorizes complex omnimodal problem solving into high-level orchestration and low-level specialized execution. Let $\mathcal{B}=\{b_{\ell}\}_{\ell=1}^{L}$ denote the candidate sub-agent backends and $\mathcal{T}=\mathcal{T}^{\mathrm{perc}}\cup\mathcal{T}^{\mathrm{act}}$ represent the unified tool set, respectively. In \method{}, the perception tool set $\mathcal{T}^{\mathrm{perc}}$ consists of tools for image analysis, audio analysis, and video analysis. The action tool set $\mathcal{T}^{\mathrm{act}}$ contains tools for web search, page visit, and code execution.
At round $t$, the main agent implements a stochastic orchestration policy:
\begin{equation}
    y_t \sim \pi_{\theta}(\cdot \mid s_t), \quad
    y_t=(a_t,\xi_t), \quad a_t\in\{\mathtt{delegate},\mathtt{complete}\},
\end{equation}
where $a_t$ is the high-level action and $\xi_t$ denotes its structured arguments. The system-level trajectory is therefore $\tau = \big(s_1,y_1,Z_1,s_2,y_2,Z_2,\ldots,s_T,y_T\big)$, where $Z_t$ is the set of sub-agent results returned after a delegation action. The objective of \method{} is to maximize expected task utility under latency and monetary budgets:
\begin{equation}
    \max_{\pi_{\theta}}\; \mathbb{E}_{\tau\sim\pi_{\theta}}\Big[R(\hat{a},a^*)
    -\lambda_{c}\operatorname{Cost}(\tau)-\lambda_{l}\operatorname{Latency}(\tau)\Big].
\end{equation}

\paragraph{Flexible Agentic Backends.}
\method{} supports heterogeneous model backends for both the main agent and sub-agents. Each backend $b\in\mathcal{B}$ is represented by a skill vector and a cost-latency profile $\phi(b)=\big(\phi^{\mathrm{txt}}_b,\phi^{\mathrm{img}}_b,\phi^{\mathrm{aud}}_b, \phi^{\mathrm{vid}}_b,\phi^{\mathrm{code}}_b,\kappa_b,\delta_b\big)$,
where the first five terms encode capability scores, while $\kappa_b$ and $\delta_b$ denote unit cost and expected latency. For a candidate sub-task $u$, the main agent predicts a requirement vector $r(u)=\big(r^{\mathrm{txt}}_u,r^{\mathrm{img}}_u,r^{\mathrm{aud}}_u,r^{\mathrm{vid}}_u,r^{\mathrm{code}}_u\big)\in[0,1]$.
The model assignment can be viewed as maximizing a cost-aware matching score:
\begin{equation}
    b^{*}(u)=\argmax_{b\in\mathcal{B}}\;
    \underbrace{\langle r(u),\phi_b\rangle}_{\text{capability match}}
    -\lambda_{c}\kappa_b\,\ell(u)-\lambda_{l}\delta_b,
\end{equation}
where $\ell(u)$ is the estimated token or step length of the sub-task. This formulation captures the practical backend selection strategy in \method{}: easy extraction and search sub-tasks are routed to cheaper models, while difficult omnimodal reasoning sub-tasks are routed to stronger backends.

\paragraph{Unified Omnimodal Tool Ecosystem.}
The tool assignment problem is also formulated as requirement matching. Let each tool $g\in\mathcal{T}$ have a capability vector
$\psi(g)=\big(\psi^{\mathrm{txt}}_g,\psi^{\mathrm{img}}_g,\psi^{\mathrm{aud}}_g, \psi^{\mathrm{vid}}_g,\psi^{\mathrm{web}}_g,\psi^{\mathrm{code}}_g\big)\in\{0,1\}$.
For a sub-task $u$, the selected tool subset is:
\begin{equation}
    \mathcal{T}^{*}(u)=\{g\in\mathcal{T}: p_{\theta}(g\mid u,s_t)>\gamma\},
\end{equation}
or equivalently the solution of a sparse coverage objective:
\begin{equation}
    \mathcal{T}^{*}(u)=\argmax_{\mathcal{S}\subseteq\mathcal{T}}
    \Big[\langle r_T(u),\sum_{g\in\mathcal{S}}\psi(g)\rangle
    -\lambda_{s}|\mathcal{S}|\Big],
\end{equation}
where $r_T(u)$ denotes the tool-side requirement vector. In this view, image, audio, and video tools supply modality evidence, while web search, page visit, and code execution tools supply external knowledge and computation. The complete low-level executor for sub-task $u$ is thus:
\begin{equation}
    e(u)=\big(b^{*}(u),\mathcal{T}^{*}(u)\big).
\end{equation}

\paragraph{Modality-aware Task Decomposition.}
At each round, the main agent first induces a latent dependency graph over unsolved sub-goals $\mathcal{G}_t=(\mathcal{V}_t,\mathcal{E}_t)$ and $\mathcal{V}_t=\{v_{t,1},\ldots,v_{t,n_t}\}$,
where a directed edge $(v_i,v_j)\in\mathcal{E}_t$ means that $v_j$ depends on the result of $v_i$. Each node is associated with a modality mask $\mu(v)\in\{0,1\}$ and a tool mask $\alpha(v)\in\{0,1\}^{|\mathcal{T}|}$,
where $\mu(v)$ indicates whether text, image, audio, or video evidence is required, and $\alpha(v)$ indicates candidate tools. A sub-goal is executable if all of its predecessors have already been completed. Therefore, the ready set at round $t$ is:
\begin{equation}
    \mathcal{R}_t=\{v\in\mathcal{V}_t\setminus\mathcal{C}_t:
    \operatorname{Pred}(v)\subseteq\mathcal{C}_t\},
\end{equation}
where $\mathcal{C}_t$ denotes completed sub-goals. The main agent selects a parallel batch from this ready set:
\begin{equation}
    \mathcal{P}_t=\argmax_{\mathcal{P}\subseteq\mathcal{R}_t}
    \sum_{v\in\mathcal{P}} U_{\theta}(v\mid s_t)
    \quad \text{s.t.}\quad |\mathcal{P}|\le K_{\max},\; \sum_{v\in\mathcal{P}}\operatorname{cost}(v)\le B_t.
\end{equation}
For each selected node $v_{t,j}\in\mathcal{P}_t$, \method{} materializes a concrete sub-task:
\begin{equation}
    u_{t,j}=\Gamma_{\theta}(v_{t,j},s_t)
    =\big(I_{t,j},C_{t,j},b_{t,j},\mathcal{T}_{t,j}\big),
\end{equation}
where $b_{t,j}=b^{*}(u_{t,j})$ and $\mathcal{T}_{t,j}=\mathcal{T}^{*}(u_{t,j})$.
The delegated action is therefore a structured batch decision:
\begin{equation}
    y_t=\mathtt{delegate}(\mathcal{U}_t),\quad
    \mathcal{U}_t=\{u_{t,j}\}_{j=1}^{K_t}, \quad K_t=|\mathcal{P}_t|.
\end{equation}
This mathematical formulation makes the decomposition strategy explicit: \method{} does not only split the task into text strings, but also predicts dependency structure, modality requirements, tool requirements, and backend assignments.

\paragraph{Parallel Sub-task Execution.}
Each delegated sub-task is executed by an independent ReAct-style \citep{yao2022react} sub-agent. For sub-task $u_{t,j}$, the sub-agent trajectory is:
\begin{equation}
    \zeta_{t,j}=\big\{(\rho_{t,j}^{(\ell)},a_{t,j}^{(\ell)},o_{t,j}^{(\ell)})\big\}_{\ell=1}^{L_{t,j}},
    \quad a_{t,j}^{(\ell)}\in\mathcal{T}_{t,j}\cup\{\mathtt{Finish}\},
\end{equation}
where $\rho_{t,j}^{(\ell)}$ is the reasoning state, $a_{t,j}^{(\ell)}$ is the selected tool action, and $o_{t,j}^{(\ell)}$ is the observation. The final sub-agent output is summarized as:
\begin{equation}
    z_{t,j}=\Omega(\zeta_{t,j})=(\sigma_{t,j},\eta_{t,j},\omega_{t,j},c_{t,j},\delta_{t,j}),
\end{equation}
where $\sigma_{t,j}$ is the execution status, $\eta_{t,j}$ is the answer-like result, $\omega_{t,j}$ is a compact trace summary, $c_{t,j}$ is the cost, and $\delta_{t,j}$ is the latency. Since all sub-tasks in $\mathcal{U}_t$ are conditionally independent given $s_t$ and do not share mutable environment states, their execution factorizes as:
\begin{equation}
    p(Z_t\mid \mathcal{U}_t,s_t)=\prod_{j=1}^{K_t}p(z_{t,j}\mid u_{t,j},s_t), \quad Z_t=\operatorname{AsyncExecute}(u_{t,1},\ldots,u_{t,K_t}).
\end{equation}
This factorization yields a formal latency advantage for parallel orchestration, as stated below. The proof can be found in Section \ref{sec:proof_p1}.
\begin{proposition}[Round-level Latency Advantage]\label{proposition1}
Consider an orchestration round $t$ with $K_t\ge 2$ ready sub-tasks whose execution times are $\delta_{t,1},\ldots,\delta_{t,K_t}>0$. Assume these sub-tasks are conditionally independent given $s_t$, do not share mutable environment states during execution, and the only additional overhead of parallel execution is a nonnegative synchronization cost $\delta^{\mathrm{sync}}_t\ge 0$. If a linear orchestrator executes the sub-tasks sequentially, while \method{} launches them asynchronously and waits for all outputs, then we have:
\begin{equation}
    \operatorname{Latency}^{\mathrm{linear}}(t)=\sum_{j=1}^{K_t}\delta_{t,j}, \quad
    \operatorname{Latency}^{\mathrm{parallel}}(t)=\max_{1\le j\le K_t}\delta_{t,j}+\delta^{\mathrm{sync}}_t.
\end{equation}
Moreover, parallel execution is no slower than linear execution if and only if:
\begin{equation}
    \delta^{\mathrm{sync}}_t\le \sum_{j=1}^{K_t}\delta_{t,j}-\max_{1\le j\le K_t}\delta_{t,j}.
\end{equation}
Under this condition, the round-level speedup satisfies:
\begin{equation}
    1\le S_t
    =\frac{\operatorname{Latency}^{\mathrm{linear}}(t)}{\operatorname{Latency}^{\mathrm{parallel}}(t)}
    =\frac{\sum_{j=1}^{K_t}\delta_{t,j}}
    {\max_{1\le j\le K_t}\delta_{t,j}+\delta^{\mathrm{sync}}_t}
    \le K_t.
\end{equation}
The upper bound $S_t=K_t$ is attainable only when $\delta^{\mathrm{sync}}_t=0$ and all sub-task runtimes are equal.
\end{proposition}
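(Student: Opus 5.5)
Our plan is to treat the proposition as an elementary statement about sums and maxima of positive reals. We obtain the two latency formulas directly from the execution models rather than derive them.

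\emph{Latency formulas.} In the linear orchestrator, sub-task $u_{t,j+1}$ starts only after $u_{t,j}$ returns. Since there is no shared mutable state, each runtime $\delta_{t,j}$ is unaffected by ordering, so the round latency telescopes to $\sum_j\delta_{t,j}$. In \method{}, all $K_t$ sub-tasks are launched at a common start time via $\operatorname{AsyncExecute}$. By the factorization of $p(Z_t\mid\mathcal{U}_t,s_t)$ and the absence of shared state, each finishes at its own runtime $\delta_{t,j}$. The orchestrator waits for the last completion, at time $\max_j\delta_{t,j}$, and then pays $\delta^{\mathrm{sync}}_t$. This gives the stated parallel latency.

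\emph{Iff condition and lower bound.} Parallel execution is no slower than linear execution exactly when $\max_j\delta_{t,j}+\delta^{\mathrm{sync}}_t\le\sum_j\delta_{t,j}$. Rearranging this inequality gives the displayed condition, and both directions are immediate. Because all $\delta_{t,j}>0$, the parallel latency is strictly positive, so $S_t$ is well defined. Dividing the condition by the parallel latency yields $S_t\ge1$. We also note that the right-hand side $\sum_j\delta_{t,j}-\max_j\delta_{t,j}$ is strictly positive when $K_t\ge2$, so the condition is nonvacuous.

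\emph{Upper bound and equality.} We use $\sum_j\delta_{t,j}\le K_t\max_j\delta_{t,j}$ together with $\max_j\delta_{t,j}\le\max_j\delta_{t,j}+\delta^{\mathrm{sync}}_t$. Combining them gives
\[
S_t\le \frac{K_t\max_j\delta_{t,j}}{\max_j\delta_{t,j}+\delta^{\mathrm{sync}}_t}\le K_t .
\]
Equality in the first step forces every $\delta_{t,j}$ to equal the maximum, that is, all runtimes are equal. Equality in the second step requires $\delta^{\mathrm{sync}}_t=0$, using $\max_j\delta_{t,j}>0$ and $K_t\ge1$. Conversely, equal runtimes with zero synchronization cost give $S_t=K_t\delta/\delta=K_t$.

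The only step needing care is this equality characterization, where strict positivity of the runtimes is essential. The rest is routine algebra. The main conceptual point is justifying the latency formulas: independence and the absence of shared state are exactly what guarantee that the runtimes are unchanged between the two schedules.
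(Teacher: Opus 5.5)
Your proposal is correct and takes essentially the same route as the paper's proof. You read off the two latency formulas from the execution models, rearrange to get the iff condition, divide to get $S_t\ge 1$, and bound $S_t$ using $\delta^{\mathrm{sync}}_t\ge 0$ and $\sum_j\delta_{t,j}\le K_t\max_j\delta_{t,j}$, with equality forcing $\delta^{\mathrm{sync}}_t=0$ and equal runtimes; the only differences are that you apply the two inequalities in the opposite order and also check the converse and the nonvacuity of the condition for $K_t\ge 2$, which the paper leaves implicit.
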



\paragraph{Context Memory and Iterative Refinement.}
After each delegation round, \method{} updates a structured memory that stores the evidence returned by all sub-agents. Let the memory be:
$H_t=\{h_1,\ldots,h_{m_t}\}$ and $h=(I,b,\mathcal{T},\sigma,\eta,\omega)$, the update after round $t$ is:
\begin{equation}
    H_{t+1}=H_t\cup\{\operatorname{Summarize}(u_{t,j},z_{t,j})\}_{j=1}^{K_t}.
\end{equation}
To keep the main-agent context within the token budget $L_{\mathrm{ctx}}$, \method{} constructs a compressed context by solving:
\begin{equation}
    C_{t+1}=\argmax_{C:|C|\le L_{\mathrm{ctx}}}
    \Big[I(C;q)+\sum_{h\in H_{t+1}} w(h) I(C;h)\Big],
\end{equation}
where $I(\cdot;\cdot)$ denotes information relevance and $w(h)$ up-weights successful or recently produced evidence. The next orchestration state and budget are:
\begin{equation}
    s_{t+1}=\big(q,\mathcal{M},C_{t+1},H_{t+1},\mathcal{B},\mathcal{T},B_{t+1}\big), \quad  B_{t+1}=B_t-\sum_{j=1}^{K_t}c_{t,j}.
\end{equation}
The main agent terminates when its evidence sufficiency score exceeds a threshold:
\begin{equation}
    p^{\mathrm{stop}}_{\theta}(s_t)=p_{\theta}(a_t=\mathtt{complete}\mid s_t),
    \quad p^{\mathrm{stop}}_{\theta}(s_t)>\tau_{\mathrm{stop}}.
\end{equation}
The final answer is generated from the compressed evidence state $\hat{a}=A_{\theta}(q,\mathcal{M},C_t,H_t)$.
Otherwise, the main agent refines the dependency graph according to new evidence
$\mathcal{G}_{t+1}=\operatorname{Refine}_{\theta}(\mathcal{G}_{t},H_{t+1})$,
and continues delegation. Overall, \method{} differs from linear orchestration frameworks by explicitly modeling omnimodal agent collaboration as a dependency-aware parallel scheduling process with learnable decomposition, model selection, tool selection, evidence aggregation, and stopping decisions.

\paragraph{Theoretical Advantage over Native Omnimodal Agents.}\label{proposition2}
We next provide an information-theoretic justification for why agent orchestration can be preferable to a native omnimodal agent design in heterogeneous tasks. The proof can be found in Section \ref{sec:proof_p2}.
\begin{proposition}[Information Gain from Omnimodal Orchestration]
Let $Y$ denote the latent task answer and let $\mathcal{M}=(M_1,\ldots,M_R)$ denote $R$ modality sources. A native omnimodal agent compresses all modalities into a single internal evidence variable $E_0=f_0(q,\mathcal{M})$ under a fixed context and computation budget. An orchestration-based system assigns modality-aware sub-tasks to specialized sub-agents and obtains evidence variables $E_{\mathrm{orch}}=(E_1,\ldots,E_R)$, where $E_r=f_r(q,M_r,C_r)$ is produced by a backend/tool pair specialized for modality $M_r$. Suppose that: (i) the main agent aggregates all returned evidence without losing information relevant to $Y$; (ii) the native evidence admits modality-wise components $(E^{0}_1,\ldots,E^{0}_R)$ whose joint information upper-bounds the information retained by $E_0$; and (iii) specialized execution is at least as informative as native processing at every modality step, with a strict gain for at least one modality:
\begin{equation}
    I(Y;E_r\mid q,E_{<r})\ge I(Y;E^{0}_r\mid q,E^{0}_{<r}), \quad r=1,\ldots,R,
\end{equation}
and the inequality is strict for some $r$. Then we have:
\begin{equation}
    I(Y;E_{\mathrm{orch}}\mid q)> I(Y;E_0\mid q).
\end{equation}
Moreover, under Bayes-optimal prediction with log loss, whose minimal risk is $\mathcal{R}_{\log}(E)=H(Y\mid q,E)$, orchestration has strictly smaller expected risk:
\begin{equation}
    \mathcal{R}_{\log}(E_{\mathrm{orch}})<\mathcal{R}_{\log}(E_0).
\end{equation}
\end{proposition}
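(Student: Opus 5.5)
The argument runs through the chain rule for conditional mutual information, applied to both evidence collections and then compared term by term. First, using the chain rule on the orchestrated evidence,
\begin{equation*}
    I(Y;E_{\mathrm{orch}}\mid q)=\sum_{r=1}^{R} I(Y;E_r\mid q,E_{<r}).
\end{equation*}
Assumption (i) is what makes this the relevant quantity: the main agent's aggregated view loses nothing about $Y$, so the information available to the final answer equals $I(Y;E_{\mathrm{orch}}\mid q)$. By data processing it can never exceed it, and (i) gives equality.

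Second, the same expansion on the native side gives
\begin{equation*}
    I(Y;E^{0}_1,\ldots,E^{0}_R\mid q)=\sum_{r=1}^{R} I(Y;E^{0}_r\mid q,E^{0}_{<r}).
\end{equation*}
Assumption (ii) then yields $I(Y;E_0\mid q)\le I(Y;E^{0}_1,\ldots,E^{0}_R\mid q)$. This is the formal content of ``the modality-wise components upper-bound what $E_0$ retains.'' For instance, it holds when $E_0$ is a function of the components, by data processing.

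Third, summing assumption (iii) over $r$, with strict inequality for at least one index, gives
\begin{equation*}
    \sum_r I(Y;E_r\mid q,E_{<r})>\sum_r I(Y;E^0_r\mid q,E^0_{<r}).
\end{equation*}
All terms are finite since $I(Y;\cdot\mid q)\le H(Y\mid q)$, which we assume finite (e.g., $Y$ discrete). Chaining the three displays gives $I(Y;E_{\mathrm{orch}}\mid q)>I(Y;E_0\mid q)$.

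For the risk statement, we use the identity $H(Y\mid q,E)=H(Y\mid q)-I(Y;E\mid q)$. The term $H(Y\mid q)$ does not depend on the evidence and is finite. We also invoke the standard fact that the Bayes-optimal predictor under log loss is the posterior $p(Y\mid q,E)$, with expected loss $H(Y\mid q,E)$; the statement already takes this as the definition of $\mathcal{R}_{\log}$. Subtracting the strict mutual-information inequality from the common $H(Y\mid q)$ then gives $\mathcal{R}_{\log}(E_{\mathrm{orch}})<\mathcal{R}_{\log}(E_0)$.

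The main obstacle is not computational but interpretive: making (ii) precise enough that the inequality $I(Y;E_0\mid q)\le I(Y;E^0_{1:R}\mid q)$ is actually implied. I would state explicitly that (ii) is read as exactly this inequality, or that $E_0$ is a measurable function of $(E^0_1,\ldots,E^0_R)$ given $q$, and that the chain-rule ordering of modalities is the same on both sides. With that fixed, the remaining steps are routine, given finiteness of $H(Y\mid q)$ to avoid $\infty-\infty$ issues.
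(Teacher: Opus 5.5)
Your proposal is correct and follows essentially the same route as the paper. You apply the chain rule to both evidence tuples, use (ii) to bound $I(Y;E_0\mid q)$ by the component information, compare the sums termwise via (iii) with one strict term, and then use $H(Y\mid q,E)=H(Y\mid q)-I(Y;E\mid q)$ for the risk claim. Your explicit finiteness assumption on $H(Y\mid q)$, which rules out $\infty-\infty$, is a small rigor improvement that the paper leaves implicit.
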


\paragraph{Framework Summary.}
In summary, \method{} implements omnimodal agent orchestration as a closed-loop decision process that separates high-level planning from specialized perception and action execution. The main agent maintains a structured memory, decomposes the task into dependency-aware sub-goals, selects suitable backends and tools for each sub-task, executes independent sub-tasks in parallel, and iteratively compresses returned evidence until the answer is sufficiently supported. This design makes the system both modular and scalable: new modalities, tools, or sub-agent models can be integrated through the same requirement-matching interface, while the dependency-aware scheduler improves latency whenever multiple independent sub-tasks can be solved concurrently. Algorithm~\ref{alg:framework} summarizes the overall workflow of \method{}.

\begin{algorithm2e}[t]
\caption{Workflow of \method{}}
\label{alg:framework}
\Input{Question $q$, Modality Inputs $\mathcal{M}$, Backend Pool $\mathcal{B}$, Tool Set $\mathcal{T}$, Maximum Rounds $T_{\max}$}
\Output{Final Answer $\hat{a}$}
Initialize $H_1\leftarrow\emptyset$, $C_1\leftarrow\emptyset$, and $s_1\leftarrow(q,\mathcal{M},C_1,H_1,\mathcal{B},\mathcal{T})$\;
\For{$t=1,\ldots,T_{\max}$}{
    Sample orchestration decision $y_t=(a_t,\xi_t)\sim\pi_{\theta}(\cdot\mid s_t)$\;
    \If{$a_t=\mathtt{complete}$}{
        Generate $\hat{a}=A_{\theta}(q,\mathcal{M},C_t,H_t)$\;
        \textbf{return} $\hat{a}$\;
    }
    Induce or refine dependency graph $\mathcal{G}_t=(\mathcal{V}_t,\mathcal{E}_t)$ and compute ready set $\mathcal{R}_t$\;
    Select parallel batch $\mathcal{P}_t\subseteq\mathcal{R}_t$ under dependency and budget constraints\;
    \ForEach{$v_{t,j}\in\mathcal{P}_t$ \textnormal{in parallel}}{
        Materialize sub-task $u_{t,j}=\Gamma_{\theta}(v_{t,j},s_t)$\;
        Assign backend $b_{t,j}=b^{*}(u_{t,j})$ and tools $\mathcal{T}_{t,j}=\mathcal{T}^{*}(u_{t,j})$\;
        Execute sub-agent trajectory $\zeta_{t,j}$ and summarize $z_{t,j}=\Omega(\zeta_{t,j})$\;
    }
    Update memory $H_{t+1}\leftarrow H_t\cup\{\operatorname{Summarize}(u_{t,j},z_{t,j})\}_{j=1}^{K_t}$\;
    Compress context $C_{t+1}$ under $L_{\mathrm{ctx}}$ and update remaining budget\;
    Set $s_{t+1}\leftarrow(q,\mathcal{M},C_{t+1},H_{t+1},\mathcal{B},\mathcal{T})$\;
}
Generate fallback answer $\hat{a}=A_{\theta}(q,\mathcal{M},C_{T_{\max}},H_{T_{\max}})$\;
\textbf{return} $\hat{a}$\;
\end{algorithm2e}

\begin{figure*}[t]
    \centering
    \includegraphics[width=\linewidth]{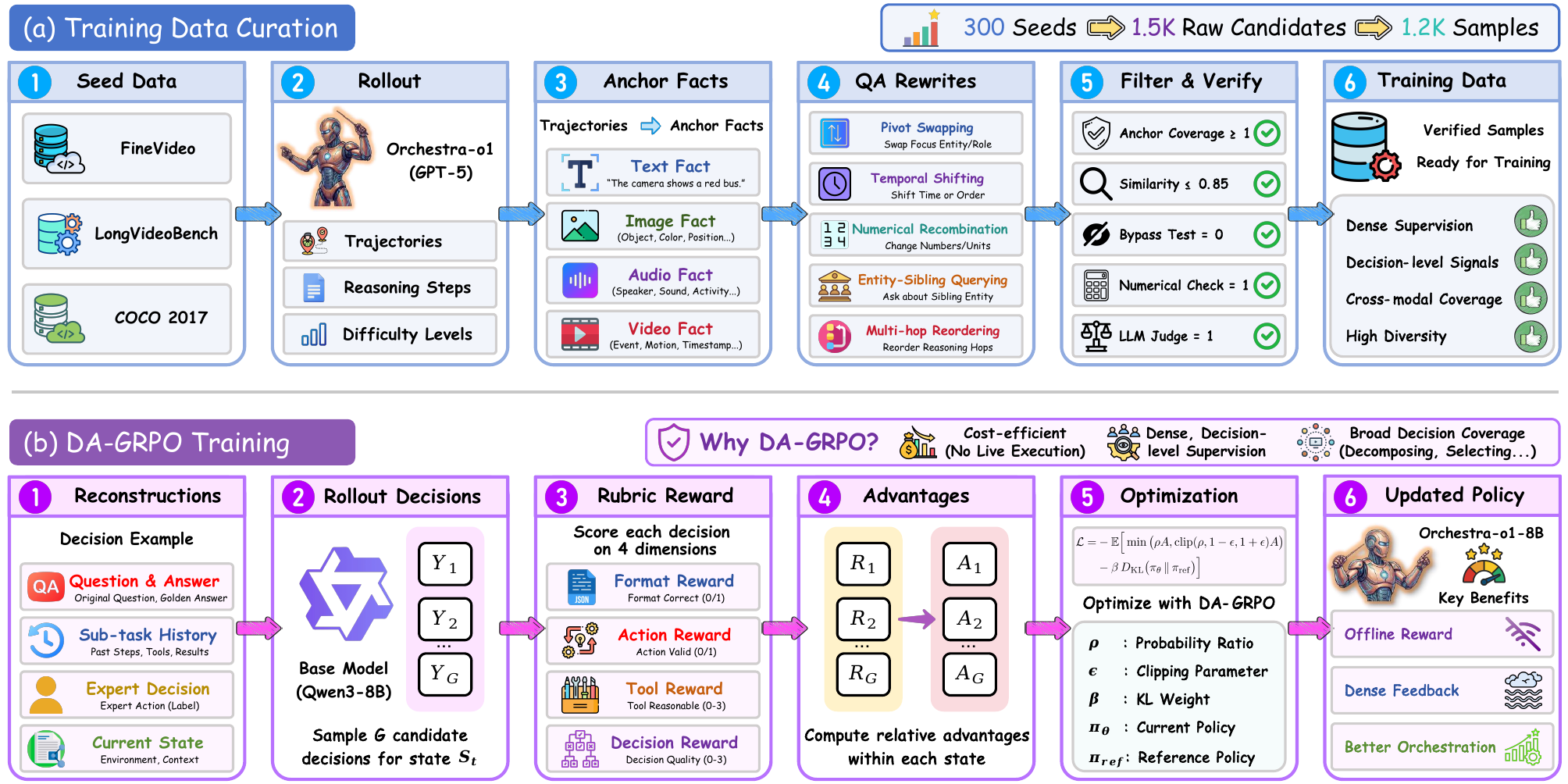}
    \caption{An overview of our training recipe, including (a) data curation pipeline and (b) DA-GRPO training process.}
    \label{fig:dagrpo}
\end{figure*}

\subsection{Training Recipe}\label{sec:training}
Although \method{} can use strong proprietary models as the main agent, a practical open-source agent system requires an open-source model that can make reliable orchestration decisions. We therefore develop a training recipe for deriving \model{} from Qwen3-8B \citep{yang2025qwen3}. The data curation and post-training process are illustrated in Figure \ref{fig:dagrpo}.

\subsubsection{Training Data Curation}

A central challenge in training an open-source orchestrator is the lack of diverse omnimodal tasks with reliable answers and explicit evidence chains. We therefore build a seed-based data curation pipeline on top of public datasets such as FineVideo \citep{Farré2024FineVideo}, LongVideoBench \citep{wu2024longvideobench}, and COCO 2017 \citep{lin2014microsoft}. Given the original seed set:
\begin{equation}
    \mathcal{D}_{0}=\{x_i=(q_i,\mathcal{M}_i,a_i,\mathcal{T}_i)\}_{i=1}^{N},
\end{equation}
where $q_i$ is the question, $\mathcal{M}_i$ denotes the image/audio/video inputs, $a_i$ is the answer, and $\mathcal{T}_i$ is the required tool set, our goal is to create new examples while keeping the original modality files unchanged.
Then we use this seed set to collect successful orchestration trajectories under \method{} (GPT-5 \citep{openaigpt5}) and transform each trajectory into an annotated reasoning solution $r_i$ with a difficulty level $\ell_i$.

The curation pipeline contains three stages. First, we extract modality-grounded anchor facts from the annotated solution and evidence sources. For each seed $x_i$, an LLM extractor produces $A_i=\{(f_{i,m},\mu_{i,m},e_{i,m})\}_{m=1}^{M_i}$, where $f_{i,m}$ is an anchor fact, $\mu_{i,m}$ is its source modality, and $e_{i,m}$ records the supporting step in the annotated solution. These anchors identify the non-bypassable perceptual facts that every valid rewrite must preserve. Second, conditioned on $(x_i,A_i)$, we generate $K_i$ candidate rewrites using five strategy families: pivot swapping, temporal shifting, numerical recombination, entity-sibling querying, and multi-hop reordering. Formally, each candidate is sampled as:
\begin{equation}
    \tilde{x}_{i,k}\sim G_{\omega}(\cdot\mid x_i,A_i,z_{i,k}),
    \quad z_{i,k}\in\{\mathtt{A},\mathtt{B},\mathtt{C},\mathtt{D},\mathtt{E}\},
\end{equation}
subject to the invariants $\tilde{\mathcal{M}}_{i,k}=\mathcal{M}_i$, $\tilde{\ell}_{i,k}=\ell_i$, $\tilde{\mathcal{T}}_{i,k}\subseteq\mathcal{T}_i$, and $|\tilde{s}_{i,k}-s_i|\le 2$, where $\tilde{s}_{i,k}$ and $s_i$ denote the total reasoning steps of the rewritten and original examples. In practice, easy seeds mainly use pivot swapping and entity-sibling querying, medium seeds additionally use temporal or numerical variants, and hard seeds emphasize numerical recombination and multi-hop reordering.
Third, we verify each candidate with a cascade of quality gates. Let $V(\tilde{x}_{i,k})\in\{0,1\}$ denote the final verification decision. We define:
\begin{equation}
\begin{aligned}
V(\tilde{x}_{i,k}) ={}& \mathbb{I}\{\operatorname{AnchorCov}(\tilde{q}_{i,k},A_i)\ge 1\}
\cdot \mathbb{I}\{\operatorname{Sim}(q_i,\tilde{q}_{i,k})\le 0.85\} \cdot \mathbb{I}\{\operatorname{Bypass}(\tilde{q}_{i,k},\tilde{a}_{i,k})=0\}\\
& \cdot \mathbb{I}\{\operatorname{NumCheck}(\tilde{r}_{i,k},\tilde{a}_{i,k})=1\} \cdot \mathbb{I}\{\operatorname{Judge}(\tilde{x}_{i,k})=1\}.
\end{aligned}
\end{equation}
The first two gates enforce anchor coverage and remove near-duplicates by normalized lexical similarity. The third gate performs a modal-bypass test by asking a strong language model to answer without access to $\mathcal{M}_i$: if the answer can still be recovered, the candidate is rejected. The fourth gate executes numerical solutions in a restricted Python sandbox when code execution or numeric answers are involved. The last gate uses an LLM judge to check factual consistency, difficulty preservation, and peer-level duplication among rewrites from the same seed.
All LLM-based processors in our curation pipeline, including the anchor extractor, question rewriter, and verification judge, are implemented with Claude-Opus-4.6 \citep{claudeopus46}.

Finally, all verified rewrites are merged to form the task set $\mathcal{D} =\{\tilde{x}_{i,k}:V(\tilde{x}_{i,k})=1\}$.
Our implementation extracts valid anchors for 300 seeds, generates about 1500 raw rewrite candidates, and retains around 1200 verified examples after filtering. For a trajectory $\tau = \big\{(s_t, y_t^{*}, Z_t)\big\}_{t=1}^{N}$, we create $N$ decision-level examples, where $s_t$ reconstructs the exact main-agent state before the expert decision and $y_t^{*}$ stores the reference orchestration action. This gives dense supervision for delegation, tool assignment, backend selection, parallel scheduling, and stopping decisions.

\subsubsection{Decision-aligned Group Relative Policy Optimization}

We propose decision-aligned group relative policy optimization (DA-GRPO), a GRPO-style training objective tailored for main-agent orchestration. Standard GRPO \citep{guo2025deepseek} samples a group of responses for the same prompt and normalizes rewards within the group to form relative advantages. However, for agent orchestration, final-answer reward is sparse and expensive because it requires executing the whole multi-agent system. DA-GRPO instead evaluates each sampled main-agent decision directly at the current orchestration state, using expert trajectories and a rubric reward that measures whether the decision is well-formed, valid, tool-aware, and strategically useful.

For each prompt $s_i$, the policy samples a group of $G$ candidate decisions $\{y_{i,j}\}_{j=1}^{G}$. Each decision is scored by a multi-dimensional reward:
\begin{equation}
\begin{aligned}
    r_{i,j} ={}& \alpha_{1}\, r^{\mathrm{format}}_{i,j}
              + \alpha_{2}\, r^{\mathrm{action}}_{i,j}
              + \alpha_{3}\, r^{\mathrm{tool}}_{i,j}
              + \alpha_{4}\, r^{\mathrm{decision}}_{i,j},
\end{aligned}
\end{equation}
where $r^{\mathrm{format}}$ measures whether the output is a valid JSON decision, $r^{\mathrm{action}}$ measures whether the action is valid with appropriate parameters, $r^{\mathrm{tool}}$ measures whether the selected tools and sub-task assignments are reasonable, and $r^{\mathrm{decision}}$ measures the overall orchestration decision quality. The first two dimensions are binary, while the latter two are graded and normalized to $[0,1]$. In our implementation, Claude-Haiku-4.5 \citep{claudehaiku45} serves as a lightweight reward model and scores all four dimensions in a single call. The judge is given the current question, ground-truth answer, sub-task history, expert decision, and model output. Importantly, the expert decision is used as a reference rather than the only correct answer: alternative decompositions are rewarded if they are reasonable and likely to solve the task, while a \texttt{complete} decision receives the highest decision-quality score when its answer matches the ground truth. The coefficients for each reward term are empirically set as $\alpha_{1}=\alpha_{2}=0.1$, $\alpha_{3}=0.2$, and $\alpha_{4}=0.6$, prioritizing the tool reward and decision reward, since the two format-related rewards exhibit relatively good initial values.

Given group rewards, DA-GRPO computes the relative advantage of each sampled decision by normalizing within the group:
\begin{equation}
    \hat{A}_{i,j}=\frac{r_{i,j}-\mathrm{Mean}(\{r_{i,k}\}_{k=1}^{G})}
    {\mathrm{Std}(\{r_{i,k}\}_{k=1}^{G})+\epsilon}.
\end{equation}
The policy is then optimized with a clipped policy-gradient objective and a KL regularizer to the reference model:
\begin{equation}
\begin{aligned}
\mathcal{L}_{\mathrm{DA\text{-}GRPO}}(\theta)
= - \mathbb{E}_{i,j}\Big[
\min\big(\rho_{i,j}(\theta)\hat{A}_{i,j},
\mathrm{clip}(\rho_{i,j}(\theta),1-\epsilon,1+\epsilon)\hat{A}_{i,j}\big) - \beta\, D_{\mathrm{KL}}\big(\pi_{\theta}(\cdot|s_i)\,\|\,\pi_{\mathrm{ref}}(\cdot|s_i)\big)\Big],
\end{aligned}
\end{equation}
where $\rho_{i,j}(\theta)=\pi_{\theta}(y_{i,j}|s_i)/\pi_{\mathrm{old}}(y_{i,j}|s_i)$ and $\pi_{\mathrm{ref}}$ is the reference model. This objective encourages the open-source main agent to prefer decisions that are not only syntactically valid but also strategically aligned with successful orchestration behavior.
Compared with outcome-only reinforcement learning, DA-GRPO offers two advantages. First, it avoids repeatedly executing expensive sub-agent trajectories during training, since each decision can be scored offline from the reconstructed state. Second, it provides dense feedback on the main agent's core responsibilities: decomposing tasks, selecting tools, scheduling parallel sub-tasks, and deciding when evidence is sufficient for final answering. We train \model{} with this recipe and deploy it as the main agent in \method{}.

\section{Experiments}

\begin{table*}[t]
\centering
\caption{Category-wise accuracy (\%) on OmniGAIA. The non-orchestration-based models are implemented under the standard ReAct framework. The highest value in each category within each model group is highlighted in bold.}
\resizebox{\textwidth}{!}{%
\label{tab:category_breakdown}
\begin{tabular}{lcccccccccc}
\toprule
\multirow{2.5}{*}{\textbf{Method}} & \multicolumn{9}{c}{\textbf{Category-Wise Breakdown}} & \multirow{2.5}{*}{\textbf{Overall}} \\
\cmidrule(lr){2-10}
& \textbf{Geo.} & \textbf{Tech.} & \textbf{Hist.} & \textbf{Fin.} & \textbf{Sport} & \textbf{Art} & \textbf{Movie} & \textbf{Sci.} & \textbf{Food} & \\
\midrule
\rowcolor{black!5}\multicolumn{11}{c}{\textit{\textbf{Open-Source Agentic Models}}} \\
\midrule
\modelicon{1.0em}{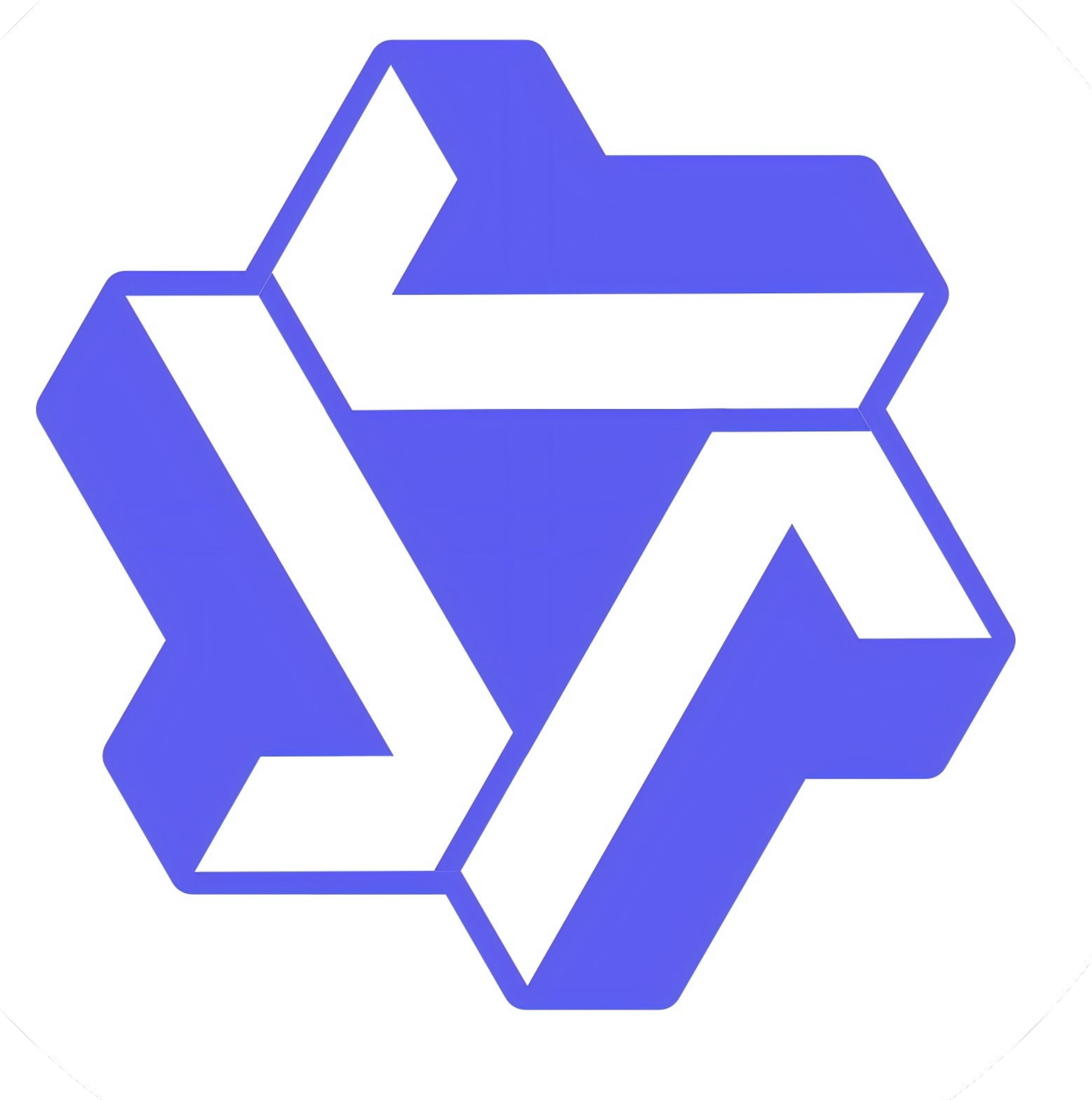}~Qwen2.5-Omni-3B & 0.0 & 2.0 & 4.5 & 0.0 & 0.0 & 0.0 & 0.0 & 3.9 & 0.0 & 1.4 \\
\modelicon{1.0em}{qwen.pdf}~Qwen2.5-Omni-7B & 1.5 & 4.1 & 7.5 & 4.0 & 0.0 & 2.8 & 0.0 & 7.7 & 5.6 & 3.6 \\
\modelicon{0.9em}{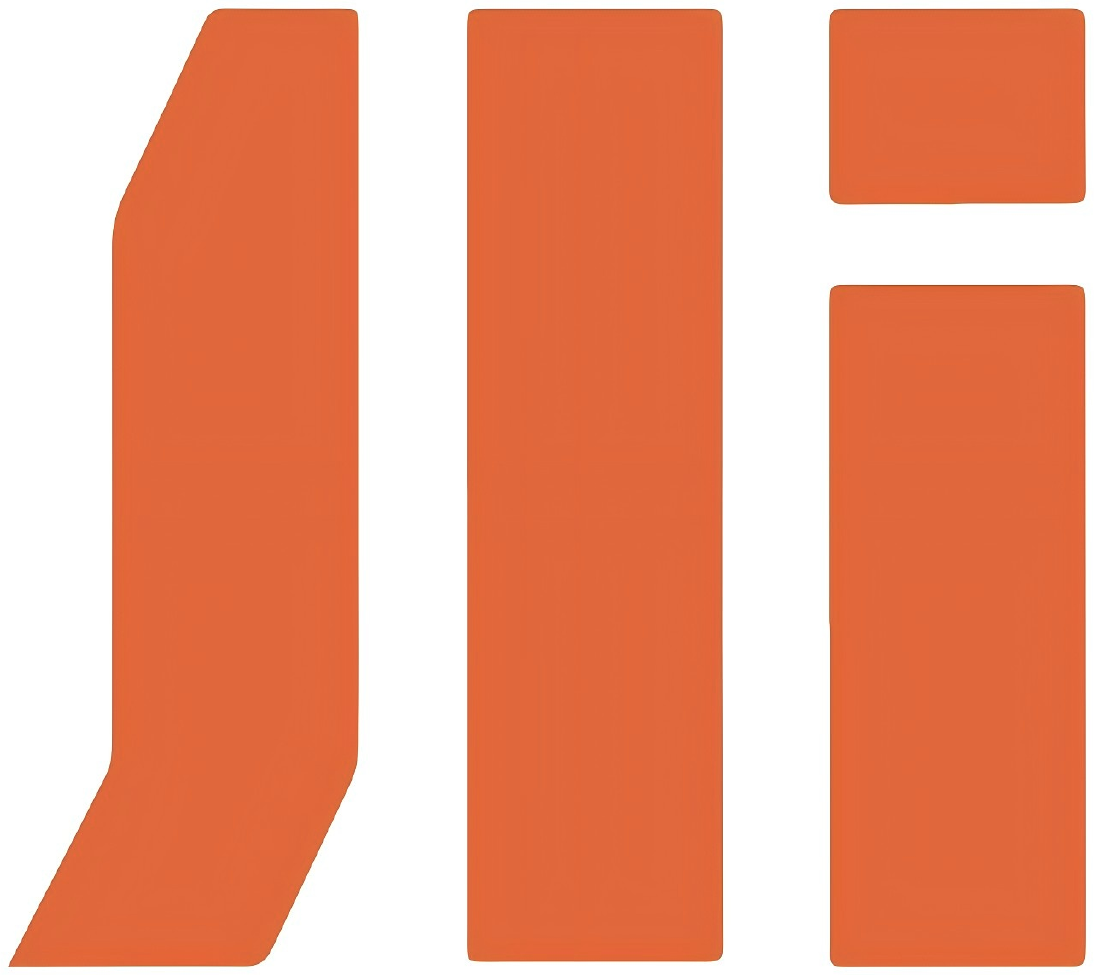}~Baichuan-Omni-1.5-8B & 2.9 & 4.1 & 3.0 & 4.0 & 2.7 & 0.0 & 3.0 & 3.8 & 0.0 & 2.8 \\
\modelicon{0.9em}{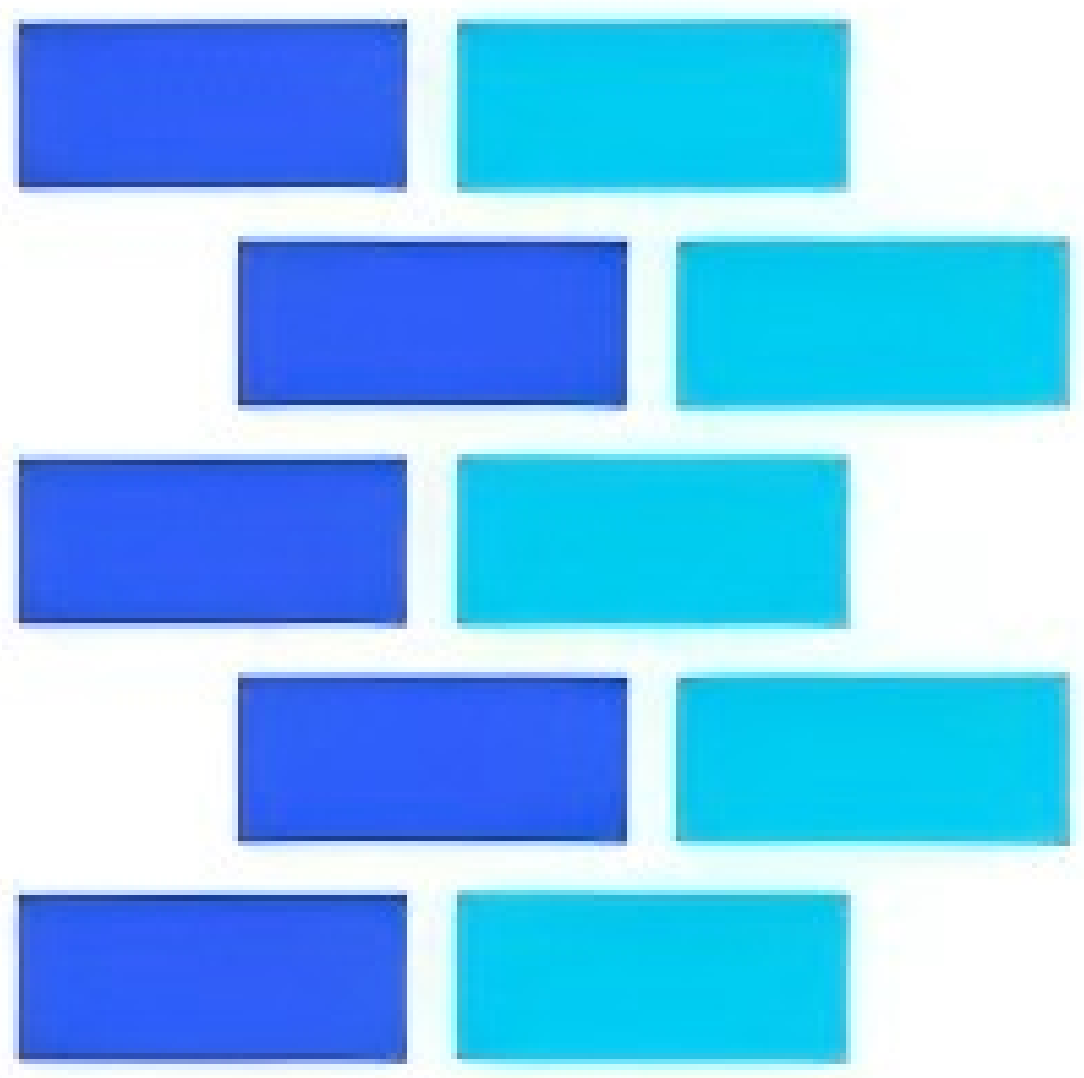}~MiniCPM-O-2.6-8B & 2.9 & 2.0 & 1.5 & 0.0 & 2.7 & 8.3 & 3.0 & 3.8 & 5.6 & 3.1 \\
\modelicon{1.0em}{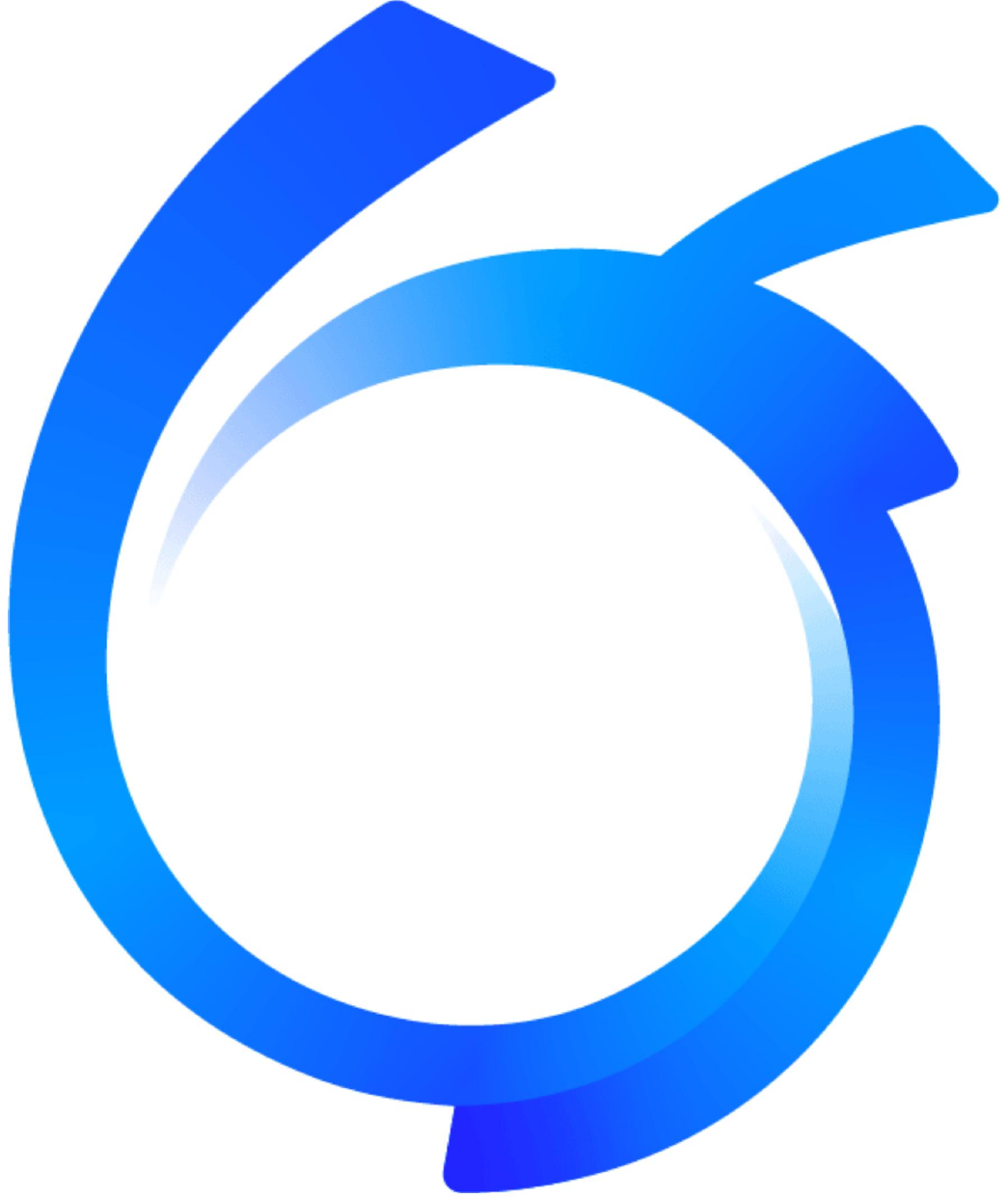}~Ming-Lite-Omni-1.5-20B-A3B & 2.9 & 6.1 & 1.5 & 4.0 & 5.4 & 2.8 & 6.1 & 7.7 & 5.6 & 3.9 \\
\modelicon{1.0em}{qwen.pdf}~Qwen3-Omni-30B-A3B & 8.7 & 14.3 & 11.9 & 28.0 & 10.8 & 13.9 & 9.1 & 15.4 & 22.2 & 13.3 \\
\modelicon{1.0em}{ming.pdf}~Ming-Flash-Omni-100B-A6B & 5.8 & 8.2 & 10.4 & 12.0 & 8.1 & 5.6 & 6.1 & 11.5 & 11.1 & 8.3 \\
\modelicon{1.0em}{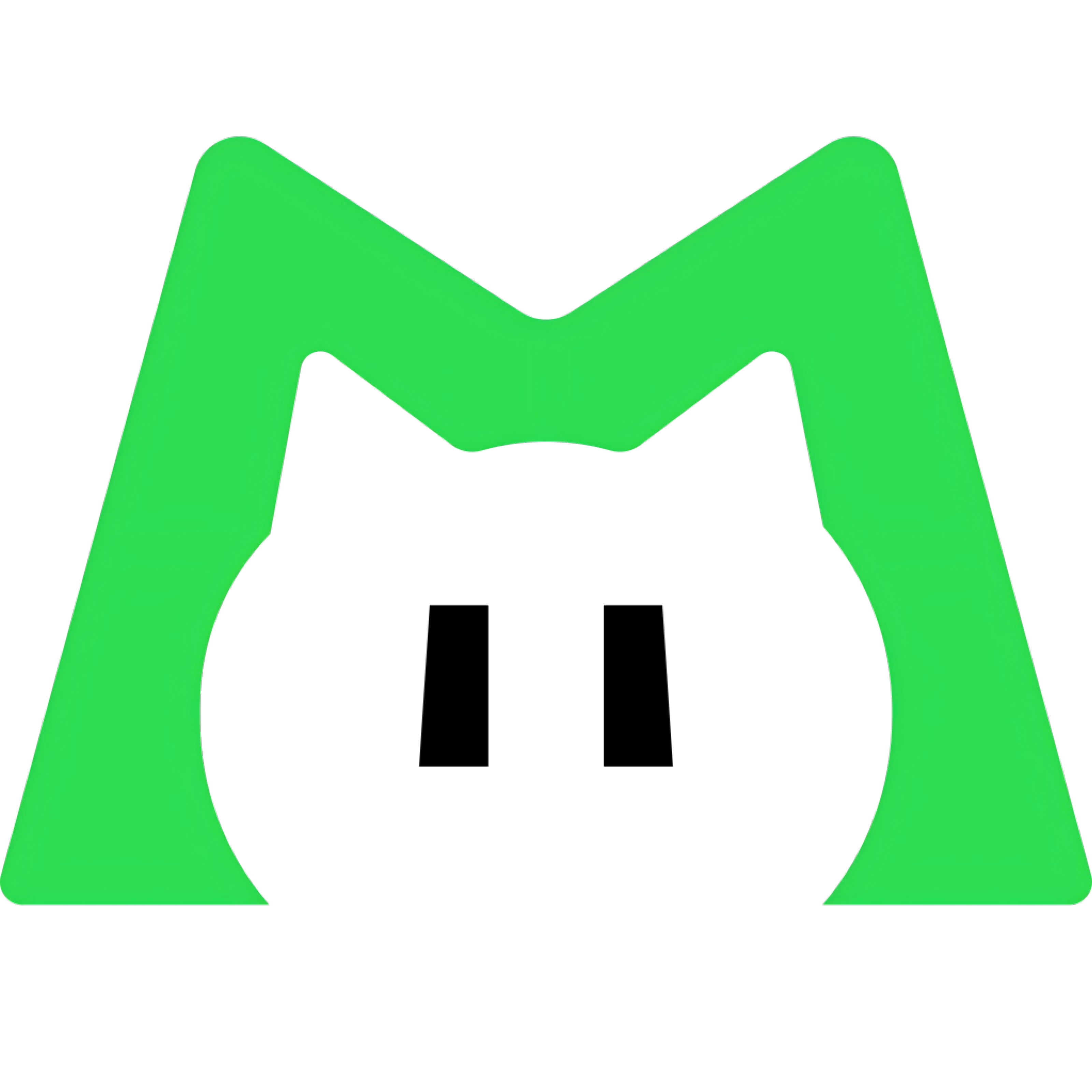}\,\,LongCat-Flash-Omni-560B-A27B & 8.7 & 10.2 & 16.4 & 12.0 & 10.8 & 8.3 & 6.1 & 11.5 & 16.7 & 11.1 \\
\modelicon{0.97em}{omniagent_icon.png}\,\,OmniAtlas-Qwen2.5-3B & 4.4 & 12.2 & 16.7 & 4.0 & 16.2 & 11.1 & 3.0 & 11.5 & 11.1 & 10.3 \\
\modelicon{0.97em}{omniagent_icon.png}\,\,OmniAtlas-Qwen2.5-7B & 8.7 & 18.4 & 16.4 & 4.0 & 16.2 & \textbf{22.2} & 3.0 & 7.7 & 22.2 & 13.3 \\
\modelicon{0.97em}{omniagent_icon.png}\,\,OmniAtlas-Qwen3-30B-A3B & 10.1 & 30.6 & 29.9 & \textbf{32.0} & 18.9 & 16.7 & 12.1 & 11.5 & 27.8 & 20.8 \\
\modelicon{1.12em}{orchestra_o1.png}\,\method{}-8B (Ours) & \textbf{21.7} & \textbf{32.7} & \textbf{37.9} & 12.0 & \textbf{29.7} & 16.7 & \textbf{45.5} & \textbf{38.5} & \textbf{38.9} & \textbf{30.0} \\
\midrule
\rowcolor{black!5}\multicolumn{11}{c}{\textit{\textbf{Proprietary Agentic Models}}} \\
\midrule
\modelicon{1.1em}{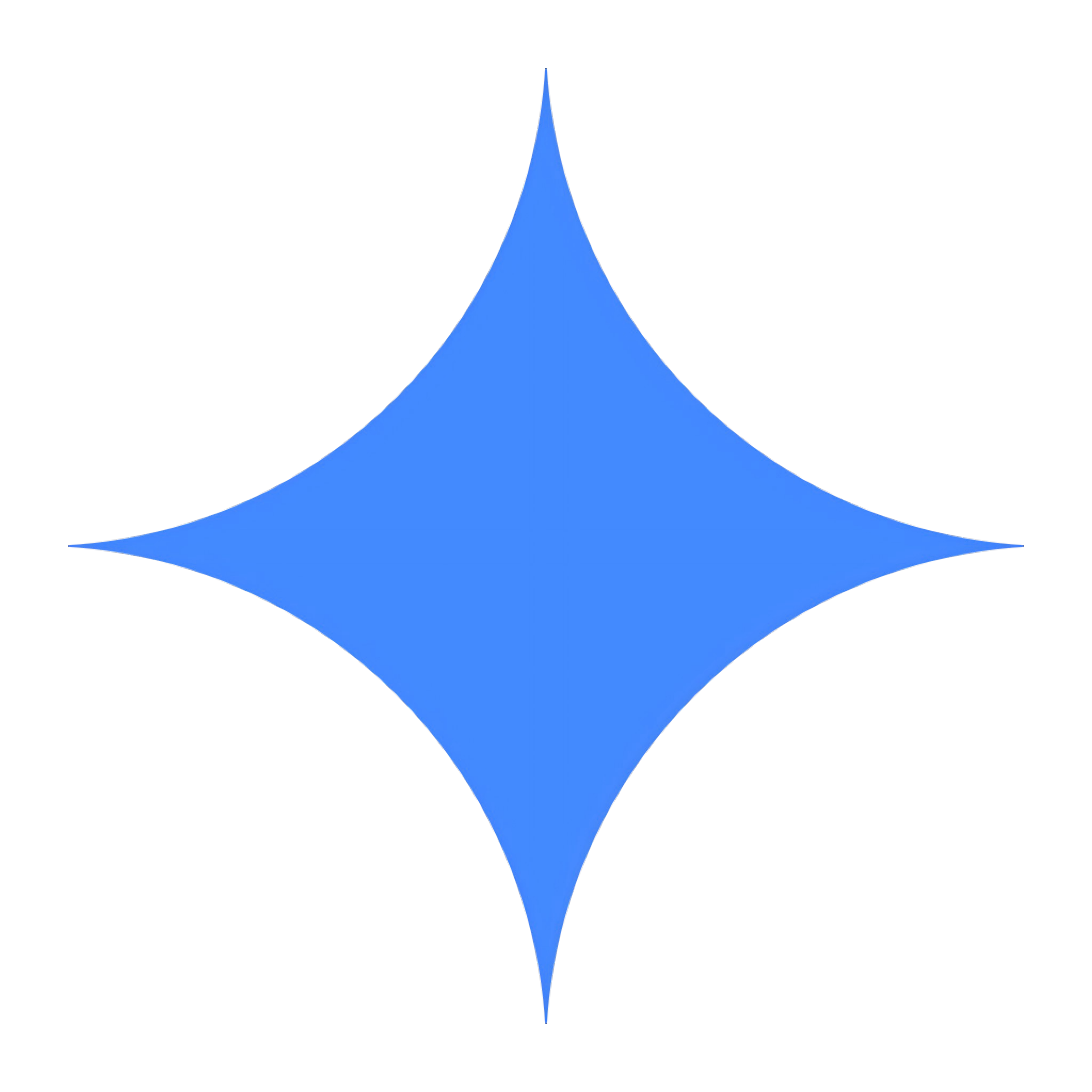}\,Gemini-2.5-Flash-Lite & 5.8 & 8.2 & 14.9 & 4.0 & 10.8 & 8.3 & 6.1 & 3.9 & 11.1 & 8.6 \\
\modelicon{1.1em}{gemini.pdf}\,Gemini-2.5-Pro & 23.2 & 28.6 & 32.8 & 20.0 & 32.4 & 41.7 & 42.4 & 26.9 & 33.3 & 30.8 \\
\modelicon{1.1em}{gemini.pdf}\,Gemini-3-Flash & 50.7 & 57.1 & 44.8 & 48.0 & 59.5 & 55.6 & 54.6 & 38.5 & 61.1 & 51.7 \\
\modelicon{1.1em}{gemini.pdf}\,Gemini-3-Pro & 65.2 & 59.2 & 62.1 & \textbf{72.0} & 78.4 & 52.8 & 48.5 & 42.3 & \textbf{88.9} & 62.5 \\
\modelicon{1.1em}{aorchestra.png}\,AOrchestra-GPT-5 & 34.8 & 40.8 & 56.1 & 32.0 & 51.4 & 25.0 & 42.4 & 30.8 & 22.2 & 40.0 \\
\modelicon{1.12em}{orchestra_o1.png}\,\method{}-GPT-5 (Ours) & \textbf{72.5} & \textbf{69.4} & \textbf{75.8} & 64.0 & \textbf{83.8} & \textbf{63.9} & \textbf{69.7} & \textbf{73.1} & 83.3 & \textbf{72.8} \\
\bottomrule
\end{tabular}%
}
\end{table*}

\begin{figure*}[t]
    \centering
    \includegraphics[width=\linewidth]{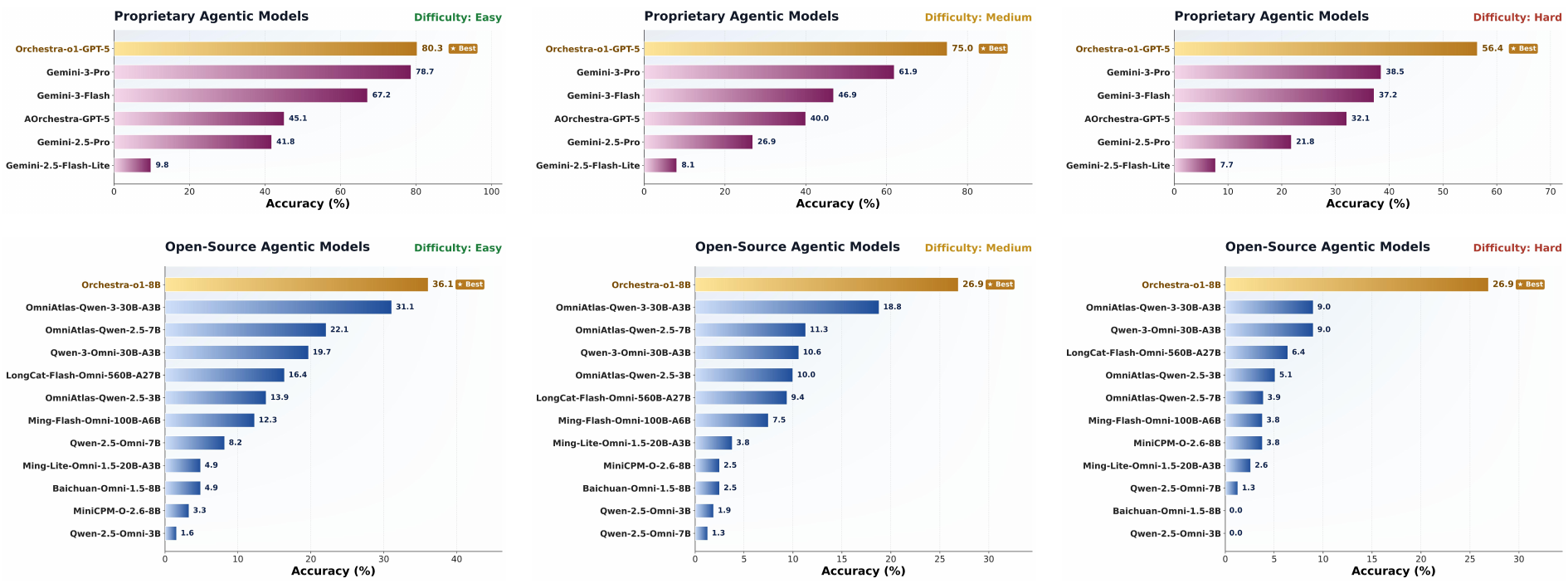}
    \caption{Difficulty-level comparison among open-source and proprietary agentic models on OmniGAIA. The non-orchestration-based models are implemented under the standard ReAct framework.}
    \label{fig:difficulty}
\end{figure*}

\begin{figure*}[t]
    \centering
    \includegraphics[width=\linewidth]{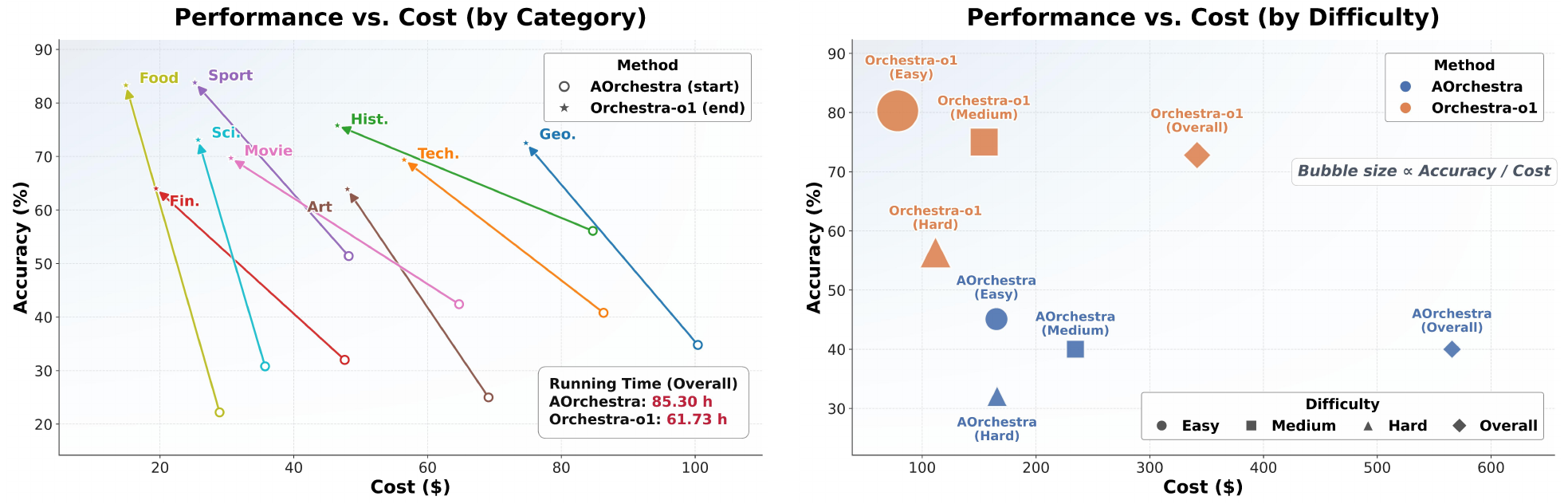}
    \caption{Efficiency Comparison between \method{} and AOrchestra.}
    \label{fig:efficiency}
\end{figure*}

\subsection{Experimental Setup}

\paragraph{Benchmark and Baselines.}
We evaluate all methods on OmniGAIA \citep{li2026omnigaia}, a challenging omnimodal agent benchmark that covers heterogeneous inputs including text, image, audio, and video. Each task requires a concise final answer and is associated with a difficulty level (Easy, Medium, or Hard) and a topical category. We report accuracy as the primary metric. For detailed analysis, we additionally break down the results by category and by difficulty level.
We compare \method{} with three groups of baselines. First, we evaluate native open-source omnimodal models, including Qwen2.5-Omni \citep{Qwen2.5-Omni}, Baichuan-Omni \citep{li2025baichuan}, MiniCPM-O \citep{yao2024minicpm}, Ming-Lite-Omni \citep{Mingomni2025}, Qwen3-Omni \citep{Qwen3-Omni}, Ming-Flash-Omni \citep{ai2025ming}, LongCat-Flash-Omni \citep{team2025longcat}, and OmniAtlas \citep{li2026omnigaia} variants. Second, we compare with proprietary omnimodal models, including Gemini-2.5 \citep{gemini25flashlite,gemini25pro} and Gemini-3 \citep{gemini3flash,gemini3pro} variants. Third, we compare with orchestration-based agent systems, especially AOrchestra \citep{ruan2026aorchestra}, which is the strongest open-source orchestration baseline in our experiments. Non-orchestration-based models are implemented under a standard ReAct-style agent framework.

\paragraph{Implementation Details.}
For the proprietary setting, we use GPT-5 as the main agent of \method{}. For the open-source setting, we train \model{} from Qwen3-8B \citep{yang2025qwen3} and deploy it as the main agent. The maximum number of main-agent orchestration attempts is set to 10. All sub-tasks within the same delegation call are executed asynchronously by separate ReAct-style sub-agents with cloned environments, and each sub-agent can use the assigned subset of tools. The maximum step for sub-agents is set to 30.
The tool ecosystem contains six tools: image analysis, audio analysis, video analysis, web search, page visit, and code execution. For the open-source training experiments, DA-GRPO is trained on a single node with 8 $\times$ H20 GPUs. We use a train batch size of 24, rollout group size of 8, learning rate $5\times10^{-6}$, KL coefficient 0.01, and cosine learning-rate decay. The maximums of prompt length and response length are set to 24,576 and 4,096, respectively. The training process is stopped after 5 epochs. The reward is a weighted sum of format correctness, action validity, tool reasonableness, and decision quality, with weights 0.1, 0.1, 0.2, and 0.6, respectively.

\subsection{Main Results}

\paragraph{Category-wise Comparison.}
Table \ref{tab:category_breakdown} reports category-wise accuracy on OmniGAIA. \method{}-GPT-5 achieves the best overall accuracy of $72.8\%$, outperforming the strongest native proprietary model, Gemini-3-Pro, by $10.3\%$ absolute accuracy and outperforming AOrchestra-GPT-5 by $32.8\%$ absolute accuracy. The improvement is consistent across most categories. In particular, \method{}-GPT-5 obtains strong gains in geography, technology, history, sport, art, movie, and science, showing that explicit omnimodal orchestration is broadly useful rather than being specialized to a single topic domain.
The open-source results further demonstrate the effectiveness of our training recipe. \model{} achieves $30.0\%$ overall accuracy, substantially improving over the strongest open-source baseline OmniAtlas-Qwen-3-30B-A3B ($20.8\%$), despite using a smaller 8B main-agent backbone. The gains are especially large in categories that benefit from structured evidence gathering and tool use, such as geography, history, movie, science, and food. These results suggest that a compact language model can become a competitive omnimodal orchestrator when trained with DA-GRPO.

\paragraph{Difficulty-level Comparison.}
Figure \ref{fig:difficulty} compares methods under easy, medium, and hard difficulty levels. Across all difficulty groups, \method{} consistently ranks first among methods in the corresponding model family. In the proprietary setting, \method{}-GPT-5 reaches $80.3\%$, $75.0\%$, and $56.4\%$ accuracy on easy, medium, and hard tasks, respectively. Compared with AOrchestra-GPT-5, the gains are $35.2\%$, $35.0\%$, and $24.3\%$ absolute accuracy. The improvement on hard tasks is particularly important because these tasks usually require multi-step reasoning over several pieces of heterogeneous evidence. The result indicates that dependency-aware decomposition and iterative evidence aggregation help the main agent avoid premature answering and better exploit specialized sub-agents.
In the open-source setting, \model{} also improves the previous best results across all difficulty levels, reaching $36.1\%$ on easy, $26.9\%$ on medium, and $26.9\%$ on hard tasks. The relatively strong hard-task performance shows that DA-GRPO does not merely teach surface-level JSON formatting; instead, it improves the strategic quality of orchestration decisions, such as when to delegate, which tools to assign, and when to stop.

\paragraph{Efficiency Analysis.}
Figure \ref{fig:efficiency} compares the efficiency of \method{} and AOrchestra with GPT-5 as the main agent. \method{} achieves higher accuracy while using lower cost across easy, medium, hard, and overall splits. Overall, \method{} reaches $72.8\%$ accuracy with a cost of $341.6$, while AOrchestra obtains $40.0\%$ accuracy with a cost of $565.7$. This means that \method{} is not only more accurate but also more cost-effective.
The efficiency advantage comes from two design choices. First, \method{} executes independent sub-tasks in parallel within a single orchestration round, reducing latency compared with linear sub-agent workflows. Second, the main agent explicitly selects tools and sub-agent backends for each sub-task, which prevents unnecessary use of expensive or irrelevant capabilities. The observed cost-accuracy trade-off is consistent with Proposition \ref{proposition1}: when several independent perception or information-seeking sub-tasks can be executed simultaneously, parallel orchestration reduces the effective round-level latency and improves resource utilization.

\subsection{Ablation Study}

\paragraph{Ablation on Agent Harness.}
\begin{wrapfigure}{r}{0.6\textwidth}\vspace{-5mm}
    \centering
    \includegraphics[width=\linewidth]{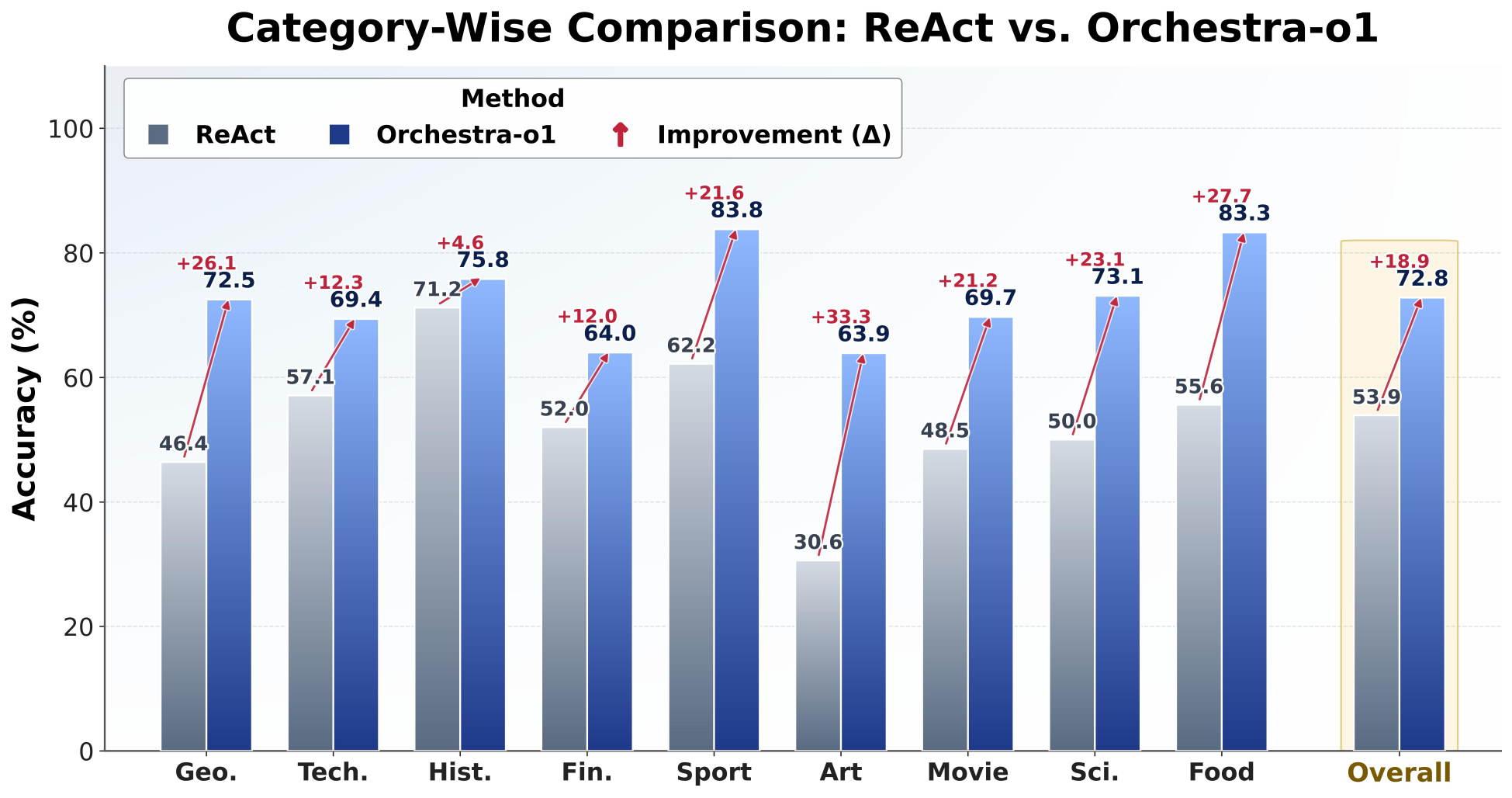}
    \caption{Ablation on the agent harness design.}
    \label{fig:react}
    \vspace{-3mm}
\end{wrapfigure}
Figure \ref{fig:react} studies whether the gains come from the orchestration framework rather than only from the GPT-5 backend. We compare a standard ReAct-GPT-5 agent with \method{}-GPT-5 under the same perception and action tools. \method{} improves the overall accuracy from $53.9\%$ to $72.8\%$, with consistent gains in all categories. The largest gains appear in categories such as art, food, geography, science, movie, and sport, where tasks often require specialized omnimodal perception or external information retrieval before final reasoning. This confirms that the proposed harness design, including task decomposition and sub-agent specialization, provides substantial benefits beyond a strong single-agent ReAct loop.

\paragraph{Ablation on Post-training Recipe.}
\begin{wraptable}{r}{0.55\textwidth}\vspace{-5mm}
    \centering
    \caption{Ablation on the post-training recipe.}
    \label{tab:ablation_training}
    \resizebox{\linewidth}{!}{%
    \begin{tabular}{lcccc}
        \toprule
        \textbf{Framework}   & \textbf{Model} & \textbf{Tools} & \textbf{Post-training} & \textbf{Accuracy ($\%$)} \\
        \midrule
        ReAct                & Qwen3-8B       & Omni           & None                     & 12.5              \\
        Orchestra-o1         & Qwen3-8B       & Omni           & None                    & 26.3              \\
        Orchestra-o1         & Qwen3-8B       & Omni           & SFT                    & 28.6              \\
        Orchestra-o1         & Qwen3-8B       & Omni           & Vanilla GRPO          & 27.7              \\
        Orchestra-o1         & Qwen3-8B       & Omni           & DA-GRPO                & \textbf{30.0}     \\
        \bottomrule
    \end{tabular}%
    }
\vspace{-3mm}
\end{wraptable}
Table \ref{tab:ablation_training} evaluates the contribution of the post-training recipe for Qwen3-8B. A direct ReAct-style Qwen3-8B agent achieves only $12.5\%$ accuracy. Simply placing the same model into the \method{} framework without post-training improves accuracy to $26.3\%$, showing that the orchestration scaffold itself provides a strong inductive bias. Supervised fine-tuning (SFT) further improves performance to $28.6\%$. Vanilla GRPO \citep{guo2025deepseek} reaches $27.7\%$, which is slightly worse than SFT, suggesting that sparse or weakly aligned reinforcement learning is insufficient for main-agent orchestration. In contrast, DA-GRPO achieves the best accuracy of $30.0\%$. This validates our design choice of directly rewarding decision-level alignment, tool reasonableness, and strategic orchestration quality.

\begin{figure*}[t]
    \centering
    \includegraphics[width=\linewidth]{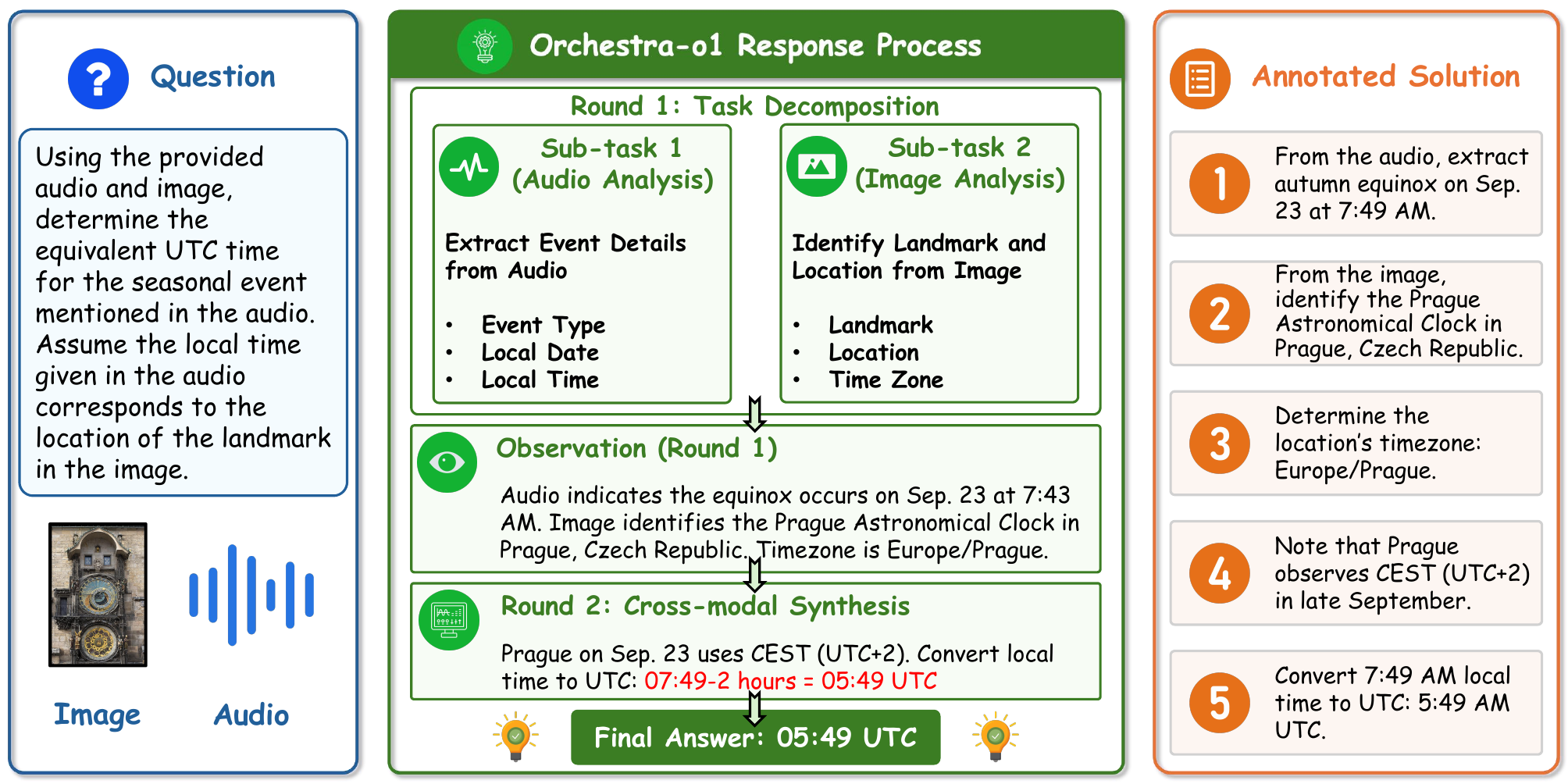}
    \caption{Case study of \method{}'s response to a representative sample on OmniGAIA.}
    \label{fig:case}
\end{figure*}

\subsection{Case Study}

Figure \ref{fig:case} presents a representative OmniGAIA example. The question provides an audio clip and an image. The audio states that Mabon falls upon the equinox on September 23 at 7:49 AM, while the image depicts the Prague Astronomical Clock in Prague, Czech Republic. Solving the task requires fusing these two independently obtained facts and then applying the timezone conversion for Prague in September, when the city observes CEST (UTC+2). Therefore, the correct UTC time is 5:49 AM.
\method{}-GPT-5 decomposes the task according to evidence dependencies. In its first orchestration round, the main agent launches an audio sub-task to extract the event, date, and local time, and an image sub-task to identify the landmark and timezone. The returned evidence is compact and complementary: the audio sub-agent extracts ``equinox on the 23rd of September at 7:49 AM'', and the image sub-agent identifies ``Prague Astronomical Clock'' with timezone Europe/Prague. The main agent then aggregates these facts and performs the final conversion, producing 05:49 UTC. This example highlights the central advantage of \method{}: it improves reliability not merely by adding more tool calls, but by coordinating specialized evidence acquisition, maintaining a structured context memory, and delaying the final answer until all necessary evidence has been grounded.

\section{Conclusion}

In this paper, we introduced \method{}, an omnimodal agent orchestration framework that separates high-level orchestration from low-level tool-augmented action execution. The main agent dynamically decomposes a complex task into dependency-aware sub-tasks, dispatches independent sub-tasks to specialized sub-agents in parallel, maintains a compact context memory, and decides when the accumulated evidence is sufficient to produce the final answer. 
We further proposed \model{}, an open-source instantiation of the main agent trained with DA-GRPO. By rewarding format correctness, action validity, tool reasonableness, and decision quality, DA-GRPO directly optimizes the strategic behaviors required by orchestration.
Comprehensive experiments demonstrate that \method{} achieves strong gains over both native omnimodal agents and orchestration baselines. In the proprietary setting, \method{} reaches the best overall accuracy while using lower cost than AOrchestra. In the open-source setting, \model{} substantially improves over strong open-source omnimodal baselines despite using a compact 8B main-agent backbone. These results suggest that omnimodal agent intelligence can be advanced not only by scaling native OLLMs, but also by learning how to coordinate specialized agents, tools, and evidence sources in a principled and efficient manner.
In future work, we plan to extend omnimodal agent orchestration to more practical scenarios, such as audio-video collaborative vibe coding and voice-guided computer-use tasks.

\bibliography{references}

\appendix

\section{Proof of Theorems}

\subsection{Proof of Proposition \ref{proposition1}}
\label{sec:proof_p1}

\begin{proof}[Proof of Proposition \ref{proposition1}]
For a linear orchestrator, the sub-tasks are executed one after another, hence the total round latency is the sum of their runtimes $\sum_{j=1}^{K_t}\delta_{t,j}$. For \method{}, conditional independence and the absence of shared mutable states allow all ready sub-tasks to be launched simultaneously. The round completes after the slowest sub-task finishes and the main agent aggregates the returned results, giving $\max_{1\le j\le K_t}\delta_{t,j}+\delta^{\mathrm{sync}}_t$.
Parallel execution is no slower than linear execution exactly when:
\begin{equation}
    \max_{1\le j\le K_t}\delta_{t,j}+\delta^{\mathrm{sync}}_t
    \le \sum_{j=1}^{K_t}\delta_{t,j},
\end{equation}
which is equivalent to the stated condition on $\delta^{\mathrm{sync}}_t$. When this condition holds, the denominator of $S_t$ is at most the numerator, so $S_t\ge 1$. Since $\delta^{\mathrm{sync}}_t\ge 0$, we also have:
\begin{equation}
    S_t\le \frac{\sum_{j=1}^{K_t}\delta_{t,j}}{\max_{1\le j\le K_t}\delta_{t,j}}
    \le K_t,
\end{equation}
where the last inequality follows because each $\delta_{t,j}\le \max_{1\le j\le K_t}\delta_{t,j}$. Equality requires both $\delta^{\mathrm{sync}}_t=0$ and $\sum_{j}\delta_{t,j}=K_t\max_j\delta_{t,j}$, which holds only when all sub-task runtimes are equal. This proves the proposition.
\end{proof}

\subsection{Proof of Proposition \ref{proposition2}}
\label{sec:proof_p2}

\begin{proof}[Proof of Proposition \ref{proposition2}]
By the chain rule of mutual information, we have:
\begin{equation}
    I(Y;E_{\mathrm{orch}}\mid q)
    = I(Y;E_1,\ldots,E_R\mid q)
    = \sum_{r=1}^{R} I(Y;E_r\mid q,E_{<r}).
\end{equation}
The modality-wise components of the native agent satisfy:
\begin{equation}
    I(Y;E_0\mid q)
    \le I(Y;E^{0}_1,\ldots,E^{0}_R\mid q)
    = \sum_{r=1}^{R} I(Y;E^{0}_r\mid q,E^{0}_{<r}),
\end{equation}
where the inequality follows because the component tuple is assumed to contain all task-relevant information retained by $E_0$. By the specialization assumption, every conditional information term of orchestration is no smaller than its native counterpart, and at least one term is strictly larger. Therefore, we have:
\begin{equation}
    I(Y;E_{\mathrm{orch}}\mid q)
    > I(Y;E^{0}_1,\ldots,E^{0}_R\mid q)
    \ge I(Y;E_0\mid q),
\end{equation}
which proves the strict information gain.

For log loss, the Bayes-optimal predictor is the posterior distribution $p(Y\mid q,E)$ and the minimum achievable expected loss equals the conditional entropy:
\begin{equation}
    \mathcal{R}_{\log}(E)=H(Y\mid q,E)=H(Y\mid q)-I(Y;E\mid q).
\end{equation}
Since $H(Y\mid q)$ is fixed for the task distribution, the strict inequality $I(Y;E_{\mathrm{orch}}\mid q)>I(Y;E_0\mid q)$ implies $\mathcal{R}_{\log}(E_{\mathrm{orch}})<\mathcal{R}_{\log}(E_0)$. Thus, when specialized sub-agents provide strictly more task-relevant evidence and the orchestrator preserves it, the multi-agent orchestration system is theoretically preferable to the native single-agent design.
\end{proof}

\section{More Experimental Details}

\subsection{Tool Configurations}

The proposed \method{} incorporates a unified tool ecosystem shared by all sub-agents. The main agent does not directly call these tools. Instead, it assigns each sub-task a subset of tools, and the corresponding sub-agent interacts with the environment through the assigned tool interface. This design keeps the main agent focused on high-level orchestration while allowing each sub-agent to perform specialized perception or action execution. A brief introduction of the incorporated tools is as follows:

\paragraph{Web Search.}
This tool performs web search for external information seeking. It is useful when the answer depends on public knowledge, recent facts, entity disambiguation, or contextual information that is not fully contained in the provided modality inputs. We use the Serper API \citep{serper} to perform web search.

\paragraph{Page Visit.}
This tool visits and extracts readable content from a given web page. It complements web search by allowing a sub-agent to inspect candidate sources in more detail. We use it for tasks where a search snippet is insufficient and the sub-agent needs to verify facts from the source page. Web pages are crawled by the Jina Reader API \citep{jina}.

\paragraph{Code Execution.}
This tool executes Python code in a controlled workspace. It is primarily used for numerical computation, table processing, date or unit conversion, and other deterministic operations. In our framework, the main agent can delegate a computation sub-task only after prerequisite values have been extracted from media or retrieved from the web. 

\paragraph{Image Analysis.}
This tool analyzes image inputs with a vision-capable backend. It is used for visual recognition, scene understanding, chart interpretation, OCR-like inspection, and extraction of image-grounded evidence. The main agent is instructed to process relevant images before relying on external search because images often contain task-specific information that cannot be recovered from text alone.

\paragraph{Audio Analysis.}
This tool transcribes and analyzes standalone audio files such as speech clips. It is used when the task requires spoken content, sound events, or audio-grounded clues. The returned transcription and summary are written into the sub-task history so that later rounds can use them as textual evidence.

\paragraph{Video Analysis.}
This tool analyzes video inputs by considering visual frames and, when appropriate, the audio track. It is used for temporal reasoning, event recognition, spoken-video understanding, and multimodal evidence extraction. Since video analysis can be expensive, \method{} encourages the main agent to formulate specific video-analysis instructions rather than asking for an overly broad description.

\subsection{System Prompt for Main Agent}

\begin{tcolorbox}[title={System Prompt for Main Agent}, sharp corners, breakable, 
      colframe=Periwinkle, colback=white, 
        boxrule=3pt, boxsep=0.5pt, enhanced, 
        shadow={3pt}{-3pt}{0pt}{opacity=1,mygrey}]
        \footnotesize
        {\fontfamily{pcr}\selectfont
\begin{lstlisting}[breaklines=true,showstringspaces=false]
You are the MainAgent (Orchestrator) for OmniGAIA benchmark tasks. Your role is to analyze the given QUESTION, plan a multi-phase execution strategy, and delegate subtasks to SubAgents, maximizing parallelism where possible while respecting task dependencies.

==== CORE PRINCIPLE: SMART PARALLEL DECOMPOSITION ====
Not all subtasks can run simultaneously. Some depend on others' results. Your job is to:
1. Identify which subtasks are INDEPENDENT and can run in parallel NOW
2. Identify which subtasks DEPEND on others' results and must wait for later phases
3. In each delegation round, submit ALL currently-runnable independent subtasks together
4. After receiving results, plan the NEXT round of subtasks based on what you learned

KEY RULES:
- Each subtask runs as an independent SubAgent with its own environment
- All subtasks within ONE delegation call execute simultaneously in parallel
- Always use the "tasks" list format (even for a single subtask)
- Each delegation (regardless of how many parallel subtasks) counts as ONE attempt

DECOMPOSITION STRATEGY:
Phase 1: Identify ALL sub-goals needed to answer the question
Phase 2: Classify each sub-goal:
  - INDEPENDENT: Can start immediately without any prior results (run in parallel NOW)
  - DEPENDENT: Needs results from other sub-goals first (plan for a LATER round)
Phase 3: Submit all INDEPENDENT sub-goals as parallel subtasks in this round
Phase 4: After receiving results, re-evaluate:
  - Are the results sufficient to answer the question? Use `complete'
  - Are there DEPENDENT sub-goals now unblocked? Submit them as the next parallel batch
  - Do results reveal NEW sub-goals? Add them to the plan

DECISION PROCESS:
1. REVIEW the SUBTASK HISTORY below - check status, result, and key findings of each attempt
2. EVALUATE: Do the results SUFFICIENTLY answer the QUESTION?
   - If any subtask returned a valid result with status "done": Consider using `complete'
   - If subtask status is "incomplete": Review its key findings to see what was accomplished
3. PLAN next action:
   - Results sufficient: Use `complete' with the answer
   - Need more work: Identify what subtasks are NOW unblocked by previous results
   - Subtask FAILED or INCOMPLETE: You can RETRY the failed/incomplete subtask in the next round. Adjust the instruction, context, or model if needed to improve the chance of success
   - Submit all currently-runnable subtasks in parallel as the next batch (including retries of failed subtasks alongside newly unblocked subtasks)
   - Think ahead: what will you need AFTER this batch? Plan accordingly with your remaining budget

BUDGET AWARENESS:
- You have LIMITED attempts (see Progress below)
- Each delegation (regardless of how many parallel subtasks) counts as ONE attempt
- Maximize parallelism within each round to get the most done per attempt
- Plan your phases wisely: with N remaining attempts, you can run N rounds of parallel subtasks
- If a result looks correct and was verified, trust it and complete

==== MODEL SELECTION GUIDE ====
{model_pricing_table}

Note: Higher-priced models are generally more capable. Price correlates with model strength.

Model Selection Strategy:
- Choose cheaper models for simple tasks (e.g., straightforward web search)
- Choose more capable models for complex reasoning, video analysis, or multi-step tasks
- You can assign DIFFERENT models to different parallel subtasks based on their complexity

==== Progress ====
[Attempt {attempt_index}/{max_attempts}] Remaining {remaining_attempts} attempts
Budget is limited. Maximize parallelism to get the most done per attempt.

==== QUESTION ====
{instruction}

==== SUBTASK HISTORY ====
{subtask_history if subtask_history else "No subtasks completed yet."}

==== AVAILABLE TOOLS (for SubAgents) ====
{tools_description}

==== OUTPUT FORMAT ====
ANSWER FORMAT: requires precise, concise answers (single word, number, or short phrase). Do NOT include explanations in the answer field.

Return JSON:

If results are SUFFICIENT:
{{
  "action": "complete",
  "reasoning": "The subtask results show [X], which answers the question",
  "params": {{ "answer": "concise answer" }}
}}

If more work is NEEDED, submit all currently-runnable subtasks in parallel:
{{
  "action": "delegate_task",
  "reasoning": "Based on previous results, [X] and [Y] can now run independently in parallel. [Z] still needs to wait for their results, so I'll handle it in the next round.",
  "params": {{
    "tasks": [
      {{
        "task_instruction": "A SPECIFIC, ACTIONABLE subtask (e.g., 'Analyze the video to identify the main topic discussed')",
        "context": "Relevant findings from previous attempts that this subtask can build on",
        "model": "one of {sub_models}",
        "tools": ["tool1", "tool2"]
      }},
      {{
        "task_instruction": "Another INDEPENDENT subtask that can run at the same time (e.g., 'Search for background information about X')",
        "context": "Relevant context",
        "model": "one of {sub_models}",
        "tools": ["tool3"]
      }}
    ]
  }}
}}

If only ONE subtask can run right now (others depend on its result):
{{
  "action": "delegate_task",
  "reasoning": "I need to first [X] before I can determine [Y]. So this round only has one subtask.",
  "params": {{
    "tasks": [
      {{
        "task_instruction": "The prerequisite subtask that must complete first",
        "context": "Relevant context",
        "model": "one of {sub_models}",
        "tools": ["tool1"]
      }}
    ]
  }}
}}

IMPORTANT RULES:
1. ALWAYS use the "tasks" list format (even for a single subtask)
2. Within each round, subtasks must be INDEPENDENT of each other, don't make one subtask depend on another subtask's result IN THE SAME ROUND
3. Subtasks CAN and SHOULD depend on results from PREVIOUS rounds, pass relevant findings via the "context" field
4. Maximize parallelism WITHIN each round: if two things CAN run independently NOW, they SHOULD be parallel subtasks
5. Select relevant tools from AVAILABLE TOOLS section for each subtask
6. Think in phases: what can I do now in parallel? What must wait for next round?
7. If a subtask returns status "failed" or "incomplete", you MAY retry it in the next delegation round. When retrying, consider: adjusting the task instruction to be more specific, providing additional context from other completed subtasks, or switching to a more capable model. Retried subtasks can run in parallel with other new subtasks.
\end{lstlisting}
}
\end{tcolorbox}

\subsection{System Prompt for Sub-agent}

\begin{tcolorbox}[title={System Prompt for Sub-agent}, sharp corners, breakable, 
      colframe=YellowGreen, colback=white, 
        boxrule=3pt, boxsep=0.5pt, enhanced, 
        shadow={3pt}{-3pt}{0pt}{opacity=1,mygrey}]
        \footnotesize
        {\fontfamily{pcr}\selectfont
\begin{lstlisting}[breaklines=true,showstringspaces=false]
You are a specialized SubAgent.
Complete the assigned task efficiently.

==== Progress ====
[Step {current_step}/{max_steps}] Remaining {remaining_steps} steps
{budget_warning}

==== Your Task (from MainAgent) ====
{task_instruction}

==== Context ====
{context}

==== Original Question (for reference) ====
{original_question}

==== Available Tools ====
{action_space}

==== Guidelines ====
1. Focus on completing YOUR TASK above
2. Think step by step before outputting an action
3. Write key observations to the "memory" field
4. Use print() in ExecuteCodeAction to see computation results
5. Once done, use 'finish' IMMEDIATELY
6. **IMAGE ANALYSIS RULE:** You may ONLY use ImageAnalysisAction on image URLs that are explicitly provided in your TASK or CONTEXT from the MainAgent. Do NOT use ImageAnalysisAction on any image URLs you encounter during web search or browsing (e.g., thumbnails, page images, search result images). These external image URLs are often inaccessible and will waste your steps.
7. EFFICIENCY RULE - Avoid Repetitive Attempts:
   - Count your attempts by behavior pattern, not just individual tool names. A "search-then-extract" cycle (e.g., GoogleSearchAction - ExtractUrlContentAction) counts as ONE search attempt, not two separate tool uses.
   - If you have performed the same behavior pattern 5 times without finding the target information, STOP immediately. Use 'finish' with whatever partial results you have gathered so far.
   - Examples of behavior patterns that count as the SAME attempt:
     GoogleSearchAction alone (one search attempt)
     GoogleSearchAction and ExtractUrlContentAction (one search-and-read attempt)
     ExtractUrlContentAction alone on different URLs (one URL extraction attempt each)
   - Do NOT keep trying different keyword variants or URLs endlessly. After 5 rounds of the same behavior pattern, you have likely exhausted what can be found.
   - When finishing with partial results, set status to "partial" and clearly describe what you DID find and what you could NOT find. The MainAgent can decide how to proceed.
8. **COMPLETENESS vs PERFECTION:** It is better to return partial results quickly than to waste all your steps searching for information that may not exist. The MainAgent can assign follow-up tasks if needed.
9. **FORBIDDEN IMAGE SOURCES:** Never attempt ImageAnalysisAction on URLs you discovered through GoogleSearchAction or ExtractUrlContentAction. Only analyze images that were part of the ORIGINAL task assignment.

BUDGET: When remaining_steps <= 5, use `finish' NOW with your best available results!
EFFICIENCY: After 5 rounds of the same behavior pattern (e.g., repeated search and extract cycles), use 'finish' NOW with partial results!

==== Output Format ====
CRITICAL: You MUST reply with ONLY a valid JSON object. No markdown, no extra text.
The "action" field MUST be one of the exact tool names listed in Available Tools (e.g., "ImageAnalysisAction"), or "finish".
Do NOT use "execute" as the action. Do NOT pass tool names via a "command" field.
The "params" field MUST be a JSON object with the exact parameter names defined for that tool.

```json
{{
    "action": "<EXACT_TOOL_NAME>",
    "params": {{ <tool-specific parameters as key-value pairs> }},
    "memory": "<your key observations>"
}}
'''

==== Memory ====
{memory}

==== Current Observation ====
{obs}
\end{lstlisting}
}
\end{tcolorbox}

\subsection{Prompt for Rubric Rewards}

\begin{tcolorbox}[title={Prompt for Rubric Rewards}, sharp corners, breakable, 
      colframe=Melon, colback=white, 
        boxrule=3pt, boxsep=0.5pt, enhanced, 
        shadow={3pt}{-3pt}{0pt}{opacity=1,mygrey}]
        \footnotesize
        {\fontfamily{pcr}\selectfont
\begin{lstlisting}[breaklines=true,showstringspaces=false]
You are an expert judge evaluating an AI agent's output in a multi-step task-solving pipeline.

The agent (Main Agent) orchestrates sub-agents to solve complex tasks. At each step, it outputs a JSON decision that either:
- **delegate_task**: Break the problem into sub-tasks and assign them to sub-agents (each sub-task should have task_instruction, model, and optionally tools)
- **complete**: Provide the final answer (should have params.answer)

You will evaluate the agent's output on 4 dimensions. FORMAT_CORRECT and ACTION_VALID are scored 0 or 1 (binary). TOOL_REASONABLE and DECISION_QUALITY are scored 0-3 (integer only).

## Original Question
{question}

## Ground Truth Answer
{ground_truth}

## Current Step Context (Subtask History)
{subtask_history}

## Expert's Decision (reference, NOT the only valid approach)
- Action: {expert_action}
- Expert Output:
```json
{expert_json}
'''

## Agent's Raw Output (to be evaluated)
```
{pred_raw}
'''

## Agent's Parsed Decision
```json
{pred_json}
'''

## Scoring Dimensions

### 1. FORMAT_CORRECT (0 or 1)
Is the agent's output a valid JSON decision with required fields?
- 1: Valid JSON with "action" field present and correctly structured
- 0: Not valid JSON, or missing "action" field, or completely unparseable

### 2. ACTION_VALID (0 or 1)
Is the chosen action valid and properly parameterized?
- 1: Action is valid ("delegate_task" or "complete") with "params" field present
- 0: Action is not in the valid set, or "params" field is missing/invalid

### 3. TOOL_REASONABLE (0-3)
Are the tool choices and sub-task assignments reasonable? (For "complete" action, evaluate whether completing at this point is appropriate)
- 3: Excellent tool/model selection, sub-tasks are well-scoped and clearly instructed
- 2: Acceptable tool selection but could be improved (e.g., missing a useful tool, overly broad instructions)
- 1: Questionable or mostly inappropriate tool choices, poorly defined sub-tasks
- 0: No tools specified when needed, or completely irrelevant assignments

### 4. DECISION_QUALITY (0-3) **Most Important**
Overall decision quality: does this decision make good progress toward solving the problem?

**Key principle: We encourage exploration. The agent does NOT need to copy the expert's exact strategy.**

- 3: Excellent decision - closely aligned with expert's approach, OR takes a different but equally valid/creative approach, OR directly provides the correct answer
- 2: Acceptable decision - reasonable strategy but with notable inefficiencies or differences from optimal
- 1: Poor decision - partially relevant but unlikely to lead to the correct answer, or fundamentally flawed
- 0: Completely wrong - irrelevant output, nonsensical, or harmful to solving the task

**When scoring DECISION_QUALITY, consider:**
- If the agent's approach differs from the expert but is still reasonable and could lead to the correct answer: score 2-3
- If the agent chose "complete" and the answer matches the ground truth:  score 3 regardless of expert action
- If the agent chose "complete" but the answer is wrong when expert says delegate:  score 0
- If the agent chose "delegate_task" with reasonable sub-tasks when expert says complete:  score 1-2 (inefficient but not wrong)

## Your Task
Evaluate the agent's output and provide scores for each dimension.

**IMPORTANT: Output ONLY the 4 scores below. Do NOT include any explanation, analysis, or reasoning. Just the scores.**

FORMAT_CORRECT: <score>
ACTION_VALID: <score>
TOOL_REASONABLE: <score>
DECISION_QUALITY: <score>
\end{lstlisting}
}
\end{tcolorbox}

\section{Limitations}

Although \method{} achieves strong omnimodal agentic intelligence, several limitations remain. 
First, orchestration introduces additional system complexity. Compared with a single native omnimodal agent, \method{} requires maintaining sub-agent histories, tool schemas, backend configurations, cost accounting, and asynchronous execution. While this design improves modularity and efficiency, it also creates more implementation components that must be carefully engineered and monitored.
Second, the current training recipe focuses on the main agent rather than jointly optimizing all sub-agents and tools. DA-GRPO improves decision-level orchestration, but the sub-agent backends remain fixed during training. A more complete learning system could jointly adapt the main agent, sub-agent policies, and tool-selection behavior from end-to-end task outcomes.

\end{document}